\newcommand{\tabincell}[2]{\begin{tabular}{@{}#1@{}}#2\end{tabular}}
\DeclareMathOperator*{\argmin}{argmin}
\newtheorem{cor}{Corollary}[section]
\newtheorem{prop}{Proposition}[section]
\newtheorem{defn}{Definition}[section]
\theoremstyle{remark}
\begin{document}

\title{
\Huge {CT-CPP: Coverage Path Planning for 3D Terrain Reconstruction using Dynamic Coverage Trees}\\
}

\author{ \begin{tabular}{cccccccccc}
{Zongyuan Shen} & {Junnan Song} & {Khushboo Mittal} & {Shalabh Gupta$^\star$}
\end{tabular}\vspace{-24pt}




\thanks {The authors are with School of Engineering, Department of Electrical and Computer Engineering, University of Connecticut, Storrs, CT, 06269, USA.}

\thanks {$^\star$Corresponding author (e-mail:  shalabh.gupta@uconn.edu)}

\thanks{Digital Object Identifier (DOI): 10.1109/LRA.2021.3119870}

\thanks{\copyright  2021 IEEE. Personal use of this material is permitted. Permission from IEEE must be obtained for all other uses, in any current or future media, including reprinting/republishing this material for advertising or promotional purposes, creating new collective works, for resale or redistribution to servers or lists, or reuse of any copyrighted component of this work in other works.}

}

\markboth{IEEE Robotics and Automation Letters. Preprint Version. Accepted September 2021}
{Shen \MakeLowercase{\textit{et al.}}: CT-CPP: Coverage Path Planning for 3D Terrain Reconstruction using Dynamic Coverage Trees} 

\maketitle
\begin{abstract}
This letter addresses the 3D coverage path planning (CPP) problem for terrain reconstruction of unknown obstacle-rich environments. Due to sensing limitations, the proposed method, called CT-CPP, performs layered scanning of the 3D region to collect terrain data, where the traveling sequence is optimized using the concept of a coverage tree (CT) with a TSP-inspired tree traversal strategy. The CT-CPP method is validated on a high-fidelity underwater simulator and the results are compared to an existing terrain following CPP method. The results show that CT-CPP yields significant reduction in trajectory length, energy consumption, and reconstruction error. 
\end{abstract}
\begin{IEEEkeywords}

Motion and Path Planning; Mapping; Marine robotics

\end{IEEEkeywords}

\vspace{-8pt}
\section{Introduction}
\label{sec:introduction}
\IEEEPARstart{C}{overage} path planning (CPP)~\cite{galceran2013} aims to find a path that enables a robot to scan all points in the search space with minimum  overlap~\cite{song2018}. CPP has a wide range of applications, such as seabed mapping~\cite{palomeras2018,shen2019}, spray-painting~\cite{vempati2018,atkar2005}, structural inspection~\cite{vidal2017,bircher2018receding,song2020online},  
mine-hunting~\cite{mukherjee2011}, oil spill cleaning~\cite{SGH13}, and arable farming~\cite{jin2011}.

Existing CPP approaches are of two types: 2D and 3D. While 2D approaches are applied for tasking on 2D surfaces~\cite{acar2002} (e.g., floor cleaning and lawn mowing), they are rendered insufficient for applications involving 3D surfaces. For example, a 2D CPP method can be applied for mapping a 3D underwater terrain by operating an autonomous underwater vehicle (AUV) at a fixed depth, such that the side-scan sonar can scan the seabed. However, this approach will be unable to explore the regions above the AUV, thus generating an incomplete terrain map. On the other hand, if the AUV is operated at a higher level, then the sensors will be unable to scan the terrain due to their limited field of view (FOV). Therefore, a 3D CPP method is needed for 3D terrain mapping; see Section~\ref{sec:review} for a review of 3D CPP methods.

\vspace{-3pt}
\subsection{Summary of the Proposed Method} This letter presents an online 3D CPP method for complete coverage of \textit{a priori} unknown environments with an application to underwater terrain reconstruction. In this method, a 3D underwater region is sliced into multiple 2D planes starting from the ocean surface, as shown in Fig.~\ref{fig:CoverageTreeExample}. The AUV can then perform top-down layered sensing by navigating on these planes and sensing the region below. However, it is possible that these planes are partitioned by obstacles into disconnected subregions, for example, levels $1$ and $2$ of Fig.~\ref{fig:CoverageTreeExample}. 
\begin{defn}[Disconnected Subregions]\label{define:Disconnectedness} Two subregions on a plane are said to be disconnected (or path-disconnected) if there exists no path on that plane connecting them. Note: disconnected subregions could be connected in the 3D space.
\end{defn}
\vspace{-3pt}

A simple layered sensing approach will miss the disconnected subregions which are on the other side of the obstacle and achieve only partial coverage. To address this issue, we developed a novel method, called CT-CPP, using a coverage tree (CT)~\cite{sadat2014}, where the nodes (except the root node) of CT represent the disconnected subregions, as shown in Fig~\ref{fig:CoverageTreeExample}. CT-CPP incrementally builds a CT in a top-down manner, which is used to plan and track the progress of 3D coverage.

\begin{figure}[t]
    \centering
    \includegraphics[width=1\columnwidth]{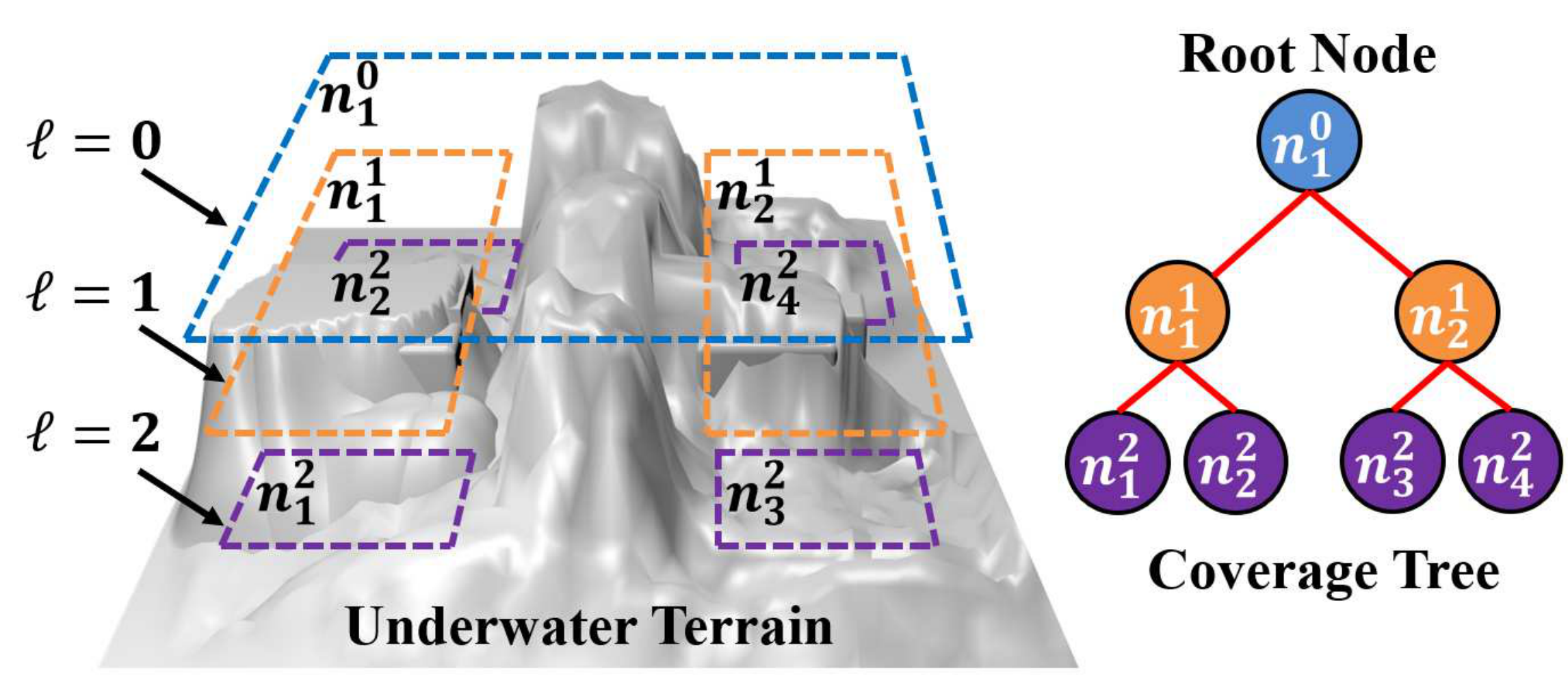}
    \caption{An example of a 3D region represented by a coverage tree.}
   \label{fig:CoverageTreeExample} \vspace{-16pt}
\end{figure}

Once the AUV reaches a node, it covers the corresponding subregion using a 2D CPP algorithm~\cite{song2018}. During the coverage of each planar subregion, the AUV uses the downward-facing multi-beam sonar sensor to collect data for the 3D terrain structures that are within the sensor's range extended at least up to the plane below. Based on the data, the AUV projects and stores the information about obstacles intersecting the plane below by forming a 2D probabilistic occupancy map (POM)~\cite{thrun2005}. Since the underwater terrain may contain narrow regions which can be risky for the AUV to navigate and avoid collisions, an image morphological operator `\textit{closing}'~\cite{soille2013} is applied on the POM to close the narrow areas. This serves two purposes: i) the updated 2D obstacle map is used to identify the disconnected subregions on that plane. i.e., it adds child nodes to the CT, and ii) it ensures the AUV's safety when it navigates on that plane. 

The updated tree is then used to plan the AUV trajectory by generating an optimized tree traversal sequence using a heuristics-based solution to the traveling salesman problem (TSP). The above process continues until the AUV completes the coverage of all nodes of the tree and no new nodes are created. Then, the data collected from all nodes are integrated offline for complete 3D terrain reconstruction. The performance of CT-CPP is comparatively evaluated on a high-fidelity underwater robotic simulator called UWSim~\cite{prats2012}. 

\begin{figure}[t]\vspace{-15pt}
    \centering 
    \subfloat[CT-CPP method]{      \includegraphics[width=0.50\columnwidth]{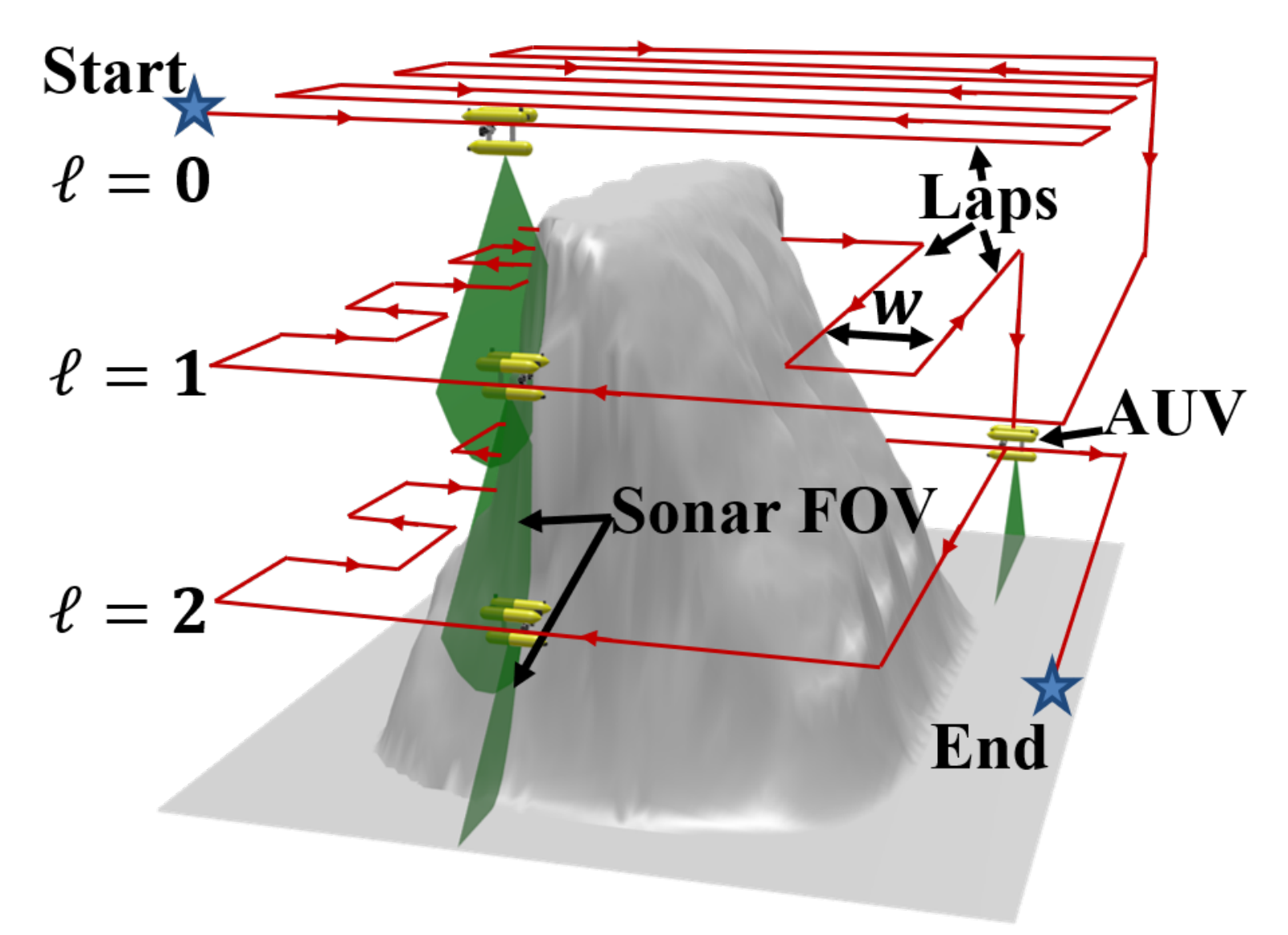}\label{fig:CTCPP}}
 \centering
    \subfloat[TF-CPP method]{
         \includegraphics[width=0.50\columnwidth]{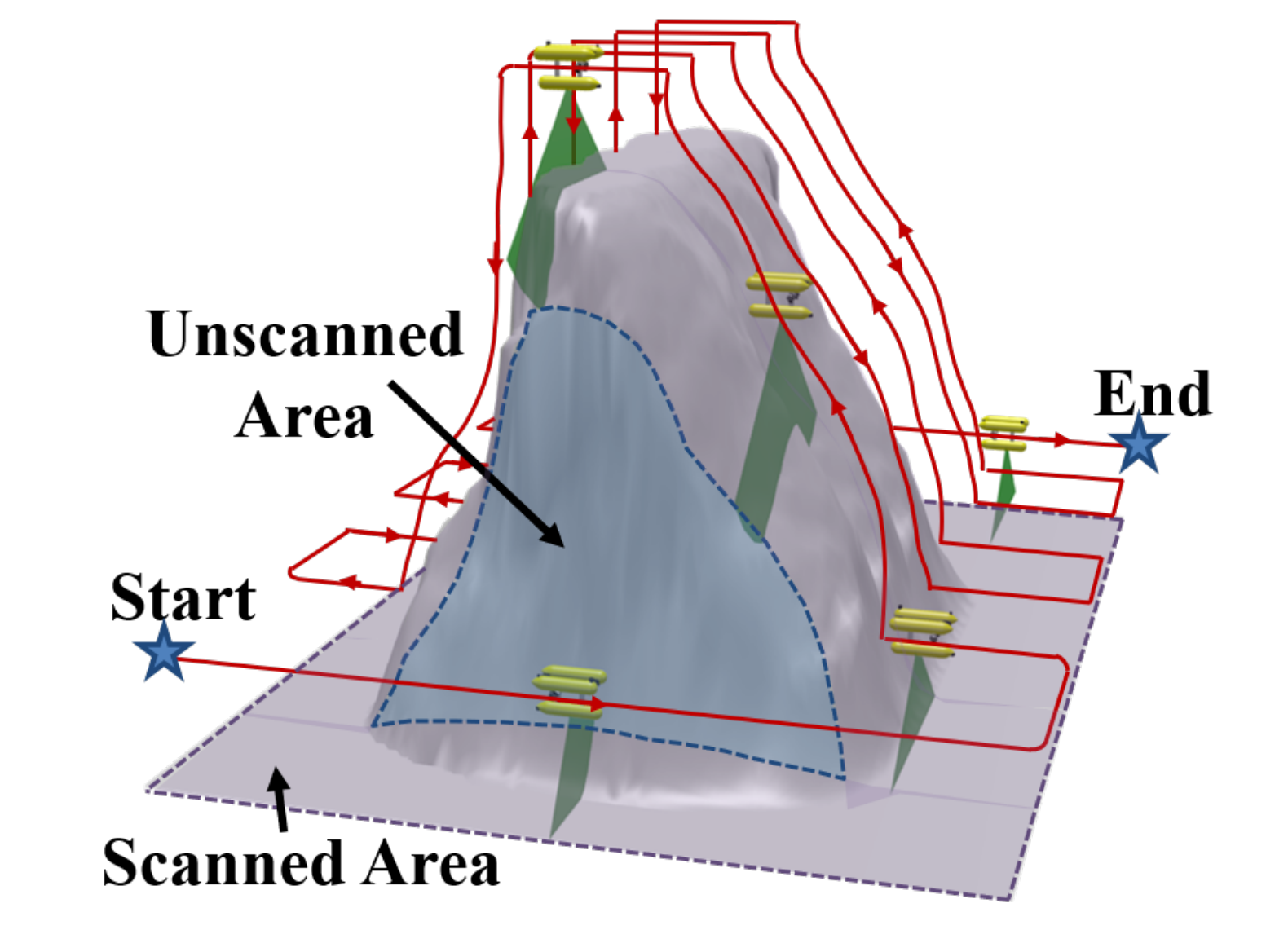}\label{fig:TFCPP}}
\caption{Coverage trajectories of (a) CT-CPP and (b) TF-CPP methods.}\label{fig:FundamentalFigure}
\vspace{-15pt}
\end{figure}

\vspace{-9pt}
\subsection{Contributions}\vspace{-3pt}
The main contribution of this letter is development of a novel online 3D CPP method, called CT-CPP, for unknown underwater terrain reconstruction considering limited sensing range, vehicle safety, and disconnected subregions formed by obstacles. CT-CPP follows a TSP-optimized node traversal sequence of an incrementally built CT for complete 3D coverage. The letter extends our preliminary work~\cite{shen2017} by presenting  i) a detailed formulation of the CT-CPP method, ii) generation of 2D symbolic maps for safe navigation on the planes, iii) a TSP-based tree traversal strategy, and iv) comparative evaluation showing significant reductions in trajectory length, energy consumption and reconstruction error.

\vspace{-9pt}
\subsection{Organization}\vspace{-3pt}
The letter is organized as follows. Section~\ref{sec:review} reviews the existing 3D CPP methods. While Section~\ref{sec:problem} formulates the problem, Section~\ref{sec:solution} describes the CT-CPP method. Section~\ref{sec:results} presents the results and Section~\ref{sec:conclusions} concludes the paper.

\vspace{-9pt}
\section{Related Work}
\label{sec:review}\vspace{-0pt}
A review of existing coverage methods is presented in~\cite{galceran2013}. While offline methods assume the environment to be \textit{a priori} known, online methods (i.e., sensor-based) are  adaptive and compute the coverage paths \textit{in situ} \cite{song2018, acar2002, shen2017}. Some relevant 3D CPP approaches are discussed below. 

Galceran et. al.~\cite{galceran2015} presented a method for underwater terrain reconstruction which relies on \textit{a priori} known bathymetric map. The terrain is classified into high-slope and planar areas, which are covered using a slicing algorithm and lawn-mowing paths, respectively. However, its performance can degrade if the \textit{a priori} information is incorrect. 
Lee et. al.~\cite{lee2009} decomposed a 3D underwater space into multiple 2D layers at various depths, and the AUV performs the 2D coverage~\cite{lumelsky1990} of each layer in a bottom-up manner. The explored areas at a lower layer are marked as artificial islands at higher layers, thus avoiding repeated scan of the same region; however, this bottom-up search could miss several disconnected regions.

Hert et. al.~\cite{hert1996} proposed an online 3D CPP method for underwater environments, called terrain following CPP (TF-CPP), where the AUV follows the back-and-forth paths systematically and maintains a constant distance to the terrain. Fig.~\ref{fig:FundamentalFigure}(b) shows an example of the TF-CPP trajectory. However, as seen in Fig.~\ref{fig:FundamentalFigure}(b), TF-CPP could miss the side faces of steep mountains. Moreover, this method did not address the risk issue in constricted areas. In addition, due to terrain following, it could generate  lot of vertical motions. In comparison, the CT-CPP trajectory, shown in Fig.~\ref{fig:FundamentalFigure}(a), will achieve complete 3D coverage (Defn~\ref{define:Complete_coverage}) by layered sensing, while ensuring safety in constricted areas. Table~\ref{tab:feature} shows a comparison of the key features of CT-CPP and TF-CPP.  

\begin{table}[!t]{}
\caption {{Comparison of Key Features of CT-CPP and TF-CPP} }\label{tab:feature}\vspace{-6pt}
\centering
 \begin{tabular}{l l l} 
 \toprule
  & {CT-CPP} & {TF-CPP} \\ 
 \hline \vspace{-4pt}
\tabincell{l}{{Terrain Surface}} &\tabincell{l}{Projectively planar}
 &\tabincell{l}{Projectively planar}\\ 
 \specialrule{0em}{3pt}{3pt}\vspace{-4pt}
 {Coverage Pattern} &\tabincell{l}{{Hierarchical multi-level}\\ {coverage of a 3D region}} & \tabincell{l}{{Terrain following}\\{and lapping}} \\
 \specialrule{0em}{3pt}{3pt} \vspace{-4pt}
 {Approach} &\tabincell{l}{{Uses coverage tree and}\\ {TSP-based optimization}\\{to plan the trajectory}}  & \tabincell{l}{{Maintains a safe distance}\\ {from the terrain using}\\ {sensor readings}}\\
 \specialrule{0em}{3pt}{3pt}\vspace{-4pt}
 \tabincell{l}{{Vehicle Safety}} & \tabincell{l}{{Probabilistic occupancy}\\{map to avoid constricted}\\{areas for safety}}   & \tabincell{l}{{No clear method to}\\ {avoid constricted areas}} \\
 \specialrule{0em}{3pt}{3pt}\vspace{-4pt}
 \tabincell{l}{{Energy Efficiency}} & \tabincell{l}{{Reduced vertical motions}}  & {Many vertical motions} \\
 \specialrule{0em}{3pt}{3pt}\vspace{-4pt}
 \tabincell{l}{{Coverage Quality}} & \tabincell{l} {{Low reconstruction error}}  & \tabincell{l}{{Missed side faces}} \\
 \specialrule{0em}{3pt}{3pt}\vspace{-0pt}
 \tabincell{l}{Control Effort} & \tabincell{l} {Easy to maintain straight\\  motion at a fixed depth}  & \tabincell{l}{ Difficult to follow\\  complex terrains} \\
 \toprule
 \end{tabular}
 \vspace{-20pt}
 \end{table}

Cheng et. al.~\cite{cheng2008} presented a 3D coverage strategy for urban structures using unmanned air vehicles (UAVs), where these structures are represented by simplified abstract models such as hemispheres and cylinders. In obstacle-free space, Sadat et. al.~\cite{sadat2014} presented a recursive algorithm for non-uniform coverage of unknown terrains using UAVs. The UAV operates at various altitudes, and identifies the areas of interest located at lower altitudes. This dynamically grows a coverage tree, where each node refers to an area of interest where high-resolution data is collected. Later in~\cite{sadat2015}, this method was improved using Hilbert space filling curves to reduce the trajectory length; however, their method works for obstacle-free environments. Some sampling-based approaches \cite{bircher2018receding,song2020online} are proposed for online exploration; however, they could generate random vertical motions consuming more energy.

\vspace{-10pt}
\section{Problem Formulation}
\label{sec:problem}
Let $\mathcal{U}\subset\mathbb{R}^3$ be an unknown underwater region containing a mountainous terrain. It is assumed that this terrain is a projectively planar surface~\cite{hert1996}. As shown in Fig.~\ref{fig:surface}, a projectively planar surface is intersected by a vertical line at only one point while a surface that is not projectively planar is intersected at three or more points. The objective is to map this terrain using an AUV equipped with a downward-facing sonar with a field of view (FOV) defined by the sensing range $r \in \mathbb{R}^+$ and aperture angle $\theta \in [0,\pi)$, as shown in Fig.~\ref{fig:DeltaDepth}. The AUV also has a forward-facing sonar for safe navigation on a plane.

For 3D CPP, $\mathcal{U}$ is sliced into $L\in \mathbb{N}^+$ equidistant horizontal planes $\{\mathcal{A}^\ell \subset \mathbb{R}^2, \ \ell = 0,\ldots L-1\}$, separated by a distance $\Delta h<r$, where $\mathcal{A}^0$ is the ocean surface, while $\mathcal{A}^{L-1}$ is the lowest plane from which the downward-facing sonar beams are able to reach the seabed. Note that these planes are constructed incrementally and $L$ is \textit{a priori} unknown. 

Starting from $\mathcal{A}^0$, the AUV navigates on each plane to collect data using its downward-facing sonar sensor within its FOV. This data is used offline for 3D terrain reconstruction~\cite{edelsbrunner1994}. Each plane is covered using a 2D-CPP algorithm (This paper uses the $\epsilon^*$-algorithm~\cite{song2018} for planar coverage). If there is an obstacle (e.g., an island) on the ocean surface, it can be detected by the forward-facing sonar when the AUV navigates on $\mathcal{A}^0$. The 2D-CPP algorithm generates a back and forth motion pattern that consists of laps as defined below.
\begin{defn}[Lap]
A lap is a straight line on which the AUV navigates without obstruction by an obstacle or a boundary during the back and forth coverage of a planar region. Furthermore, the lap width $w$ is defined as the distance between two adjacent laps. An example is shown in Fig.~\ref{fig:FundamentalFigure}a.
\end{defn}

Now we describe how $\Delta h$ is selected offline to ensure that the 3D space between any two successive planes is fully covered by the sonar's FOV.

\begin{prop}
Given the sensor FOV parameters $(r,\theta$), the 3D terrain between any two adjacent planes $\mathcal{A}^{\ell-1}$ and $\mathcal{A}^{\ell}$ is completely covered if $\Delta h$ is within the bounds 
\begin{equation*}\label{eq:depth}
0 < \Delta h \leq \left(\sqrt{r^2 - 2.25w^2} - 1.5w\cot(\theta/2)\right).\vspace{-10pt}
\end{equation*} 
\end{prop}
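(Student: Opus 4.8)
The plan is to reduce the statement to a two-dimensional cross-section and then read off the bound from the two independent limits of the sonar cone. First I would cut the scene with the vertical plane perpendicular to the laps, so that the downward sonar appears as an isosceles wedge of half-angle $\theta/2$ and slant reach $r$ emanating from the AUV, which sits on plane $\mathcal{A}^{\ell-1}$ at height $\Delta h$ above $\mathcal{A}^{\ell}$. In this cross-section the back-and-forth pattern places successive AUV tracks a distance $w$ apart, and the key modeling step is to locate the terrain point that is hardest to illuminate, namely the one whose horizontal offset from the nearest track is largest. From the boustrophedon coverage geometry (Fig.~\ref{fig:DeltaDepth}) this worst-case lateral offset is $\delta = 1.5w$, so every terrain point lies within horizontal distance $1.5w$ of some track.

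Next I would quantify the two ways the wedge fails to reach such a point. The finite aperture creates a near-field dead zone: since the cone half-width at depth $d$ below the sensor is $d\tan(\theta/2)$, a point at lateral offset $1.5w$ only enters the cone once it lies at least $1.5w\cot(\theta/2)$ below the AUV. The finite range creates a far-field limit: by the Pythagorean theorem the deepest point at lateral offset $1.5w$ still within slant range $r$ sits at depth $\sqrt{r^2-(1.5w)^2}=\sqrt{r^2-2.25w^2}$. Hence, along the worst-case vertical line, the scan from a single plane illuminates exactly the depth band $\big[\,1.5w\cot(\theta/2),\ \sqrt{r^2-2.25w^2}\,\big]$ measured below that plane.

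Finally I would stitch successive planes together. The near-field dead zone of level $\mathcal{A}^{\ell-1}$ cannot be covered from that plane, so it must be reached by the scan from the plane above it, which is a further $\Delta h$ away. Imposing that the far-field reach of the upper scan extends down at least to the top of the lower scan's band yields $\sqrt{r^2-2.25w^2}-\Delta h \ge 1.5w\cot(\theta/2)$, i.e. $\Delta h \le \sqrt{r^2-2.25w^2}-1.5w\cot(\theta/2)$. Together with the physical requirement $\Delta h>0$ this is exactly the claimed two-sided bound; implicitly it also forces $r$ to be large enough that the right-hand side stays positive.

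I expect the main obstacle to be conceptual rather than computational: correctly identifying the critical terrain point, in particular justifying the lateral offset $1.5w$ from the coverage pattern, and recognizing that the binding constraint pairs the near-field (aperture) limit of one level with the far-field (range) limit of the level above, so that their coverage bands abut without a gap. Once that worst-case configuration is pinned down, the remainder is a routine use of the cone half-width relation and the Pythagorean theorem, and the monotone decrease of the bound in $\delta$ confirms that $\delta=1.5w$ is indeed the binding case.
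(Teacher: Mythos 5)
Your proposal is correct and follows essentially the same route as the paper: identify the worst-case lateral offset $1.5w$, combine the near-field dead zone $1.5w\cot(\theta/2)$ with the far-field reach $\sqrt{r^2-(1.5w)^2}$, and require the band scanned from the plane above to abut the band scanned from the current plane, which is exactly the paper's inequality $\left(\Delta h + 1.5w\cot(\theta/2)\right)^2 \leq r^2-(1.5w)^2$. The only point you gloss over is where $1.5w$ comes from --- the paper derives it from a $0.5w$ safety standoff of the nearest lap from the obstacle plus the possibility that this lap sits up to one lap-width further away --- but this does not change the argument.
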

\begin{proof} Consider the side face of an obstacle. Suppose the AUV is navigating at $\mathcal{A}^{\ell-1}$. Since the sonar FOV with $\theta < \pi$ can provide only partial coverage of the 3D structures between $\mathcal{A}^{\ell-1}$ and $\mathcal{A}^{\ell}$, the sonar FOV from $\mathcal{A}^{\ell-2}$ must overlap with that of $\mathcal{A}^{\ell-1}$ to cover the gaps. This requires appropriate selection of $\Delta h$ for positioning $\mathcal{A}^{\ell-1}$ to achieve the minimum necessary overlap, as shown in Fig.~\ref{fig:delta_a}. Now, suppose for safety consideration, the minimum distance of the closest lap from the obstacle is $0.5w$. Then, the maximum distance the obstacle could be from this lap is $1.5w$, otherwise the AUV could move to a closest lap to the obstacle. Then, considering the worst case scenario when the distance between the closest lap and obstacle is $1.5w$, we get from the geometry of Fig.~\ref{fig:delta_a}, $\left(\Delta h + 1.5w\cot(\theta/2)\right)^2 \leq r^2 - (1.5w)^2$. Similarly, for the flat face of an obstacle, only distance of $0.5w$ needs to be considered. From  Fig.~\ref{fig:delta_b}, we see that the above bound for $\Delta h$ will also guarantee coverage of horizontal surfaces. 
\end{proof}\vspace{-6pt}
\begin{cor} \textrm{For} $\Delta h >0$, $w<\frac{2}{3}r\sin(\theta/2)$.
\end{cor}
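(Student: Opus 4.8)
The plan is to read the corollary as the existence condition for the bound established in the proposition: a strictly positive step size $\Delta h > 0$ can only be chosen if the upper bound in the proposition is itself strictly positive. So the whole argument reduces to characterizing when
\[
\sqrt{r^2 - 2.25w^2} - 1.5w\cot(\theta/2) > 0.
\]
First I would note that $2.25w^2 = (1.5w)^2$, so the inequality is just $\sqrt{r^2 - (1.5w)^2} > 1.5w\cot(\theta/2)$, and that for this to even make sense we implicitly need $r^2 \geq (1.5w)^2$ so the radicand is nonnegative.

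Next I would square both sides. This step is legitimate because both sides are nonnegative: the left side is a square root, and the right side is a product of positive quantities, since $w>0$ and, for $\theta \in (0,\pi)$, we have $\theta/2 \in (0,\pi/2)$ so $\cot(\theta/2) > 0$. Squaring yields
\[
r^2 - (1.5w)^2 > (1.5w)^2\cot^2(\theta/2),
\]
which I would rearrange to collect the $(1.5w)^2$ terms as $r^2 > (1.5w)^2\bigl(1 + \cot^2(\theta/2)\bigr)$.

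The key identity to invoke is $1 + \cot^2(\theta/2) = \csc^2(\theta/2) = 1/\sin^2(\theta/2)$, valid since $\sin(\theta/2) \neq 0$ on $(0,\pi)$. Multiplying through by $\sin^2(\theta/2)$ gives $r^2\sin^2(\theta/2) > (1.5w)^2$, and taking positive square roots (both sides nonnegative) produces $r\sin(\theta/2) > 1.5w = \tfrac{3}{2}w$, which rearranges to the claimed $w < \tfrac{2}{3}r\sin(\theta/2)$.

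I do not anticipate a substantive obstacle here; the derivation is a short chain of algebraic manipulations. The only points requiring a word of justification — and hence the places I would be most careful — are confirming the nonnegativity of both sides before squaring (so the implication is an equivalence, not just a one-way bound) and ensuring $\sin(\theta/2)$ and $\cot(\theta/2)$ are well-defined and positive on the given range $\theta \in (0,\pi)$. With those sign conditions checked, the corollary follows immediately from the positivity requirement on the proposition's upper bound.
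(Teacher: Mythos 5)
Your proposal is correct and matches the paper's evident intent: the paper states the corollary without proof, and it follows exactly as you argue, by requiring the upper bound $\sqrt{r^2-2.25w^2}-1.5w\cot(\theta/2)$ from the proposition to be strictly positive and simplifying via $1+\cot^2(\theta/2)=1/\sin^2(\theta/2)$. Your care about the sign conditions before squaring is appropriate but does not change the substance.
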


\begin{figure}[t]
    \centering
    \subfloat[Projectively planar surface.]{
        \includegraphics[width=0.48\columnwidth]{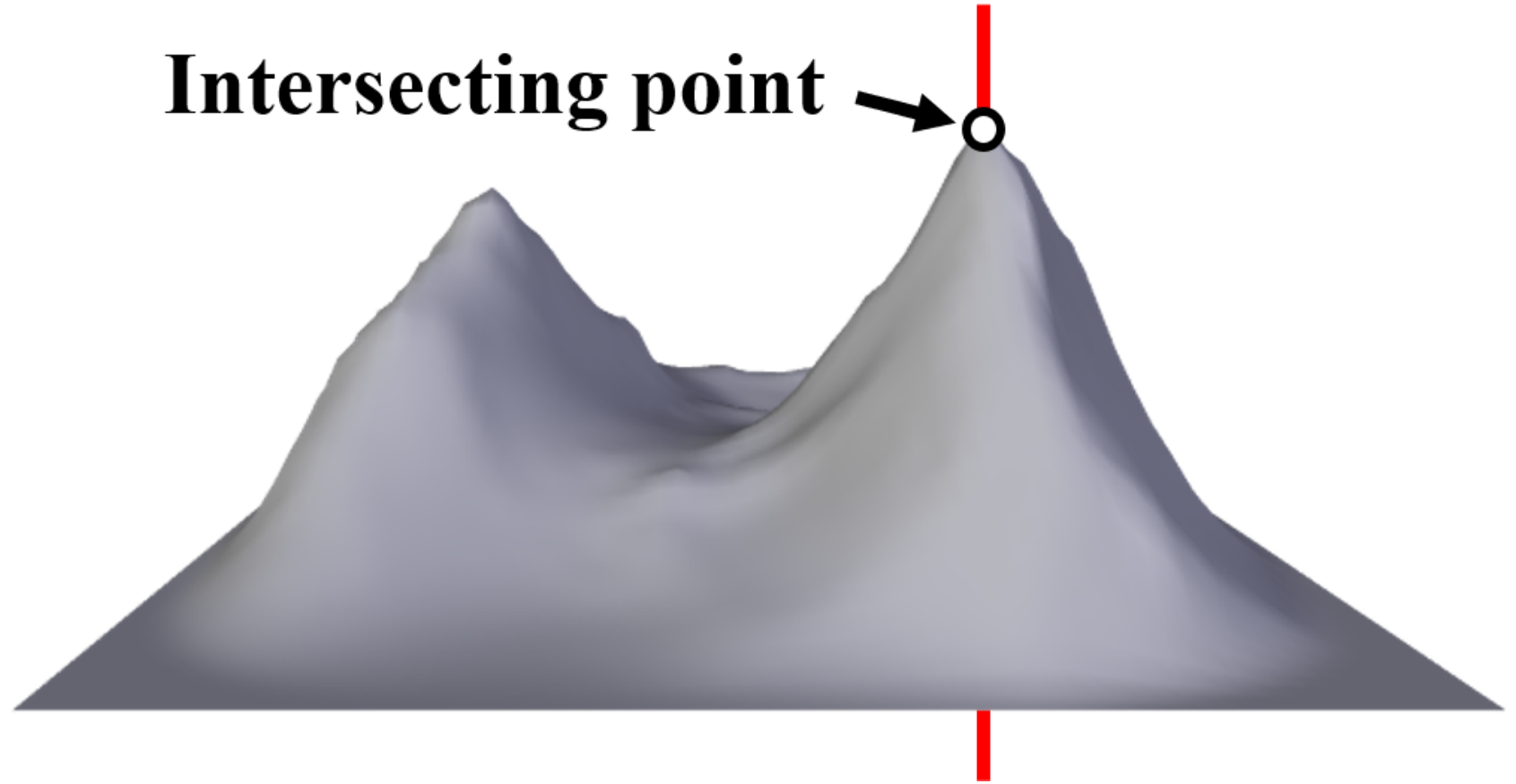}\label{fig:type1}}
 \centering
    \subfloat[Not a projectively planar surface.]{
         \includegraphics[width=0.48\columnwidth]{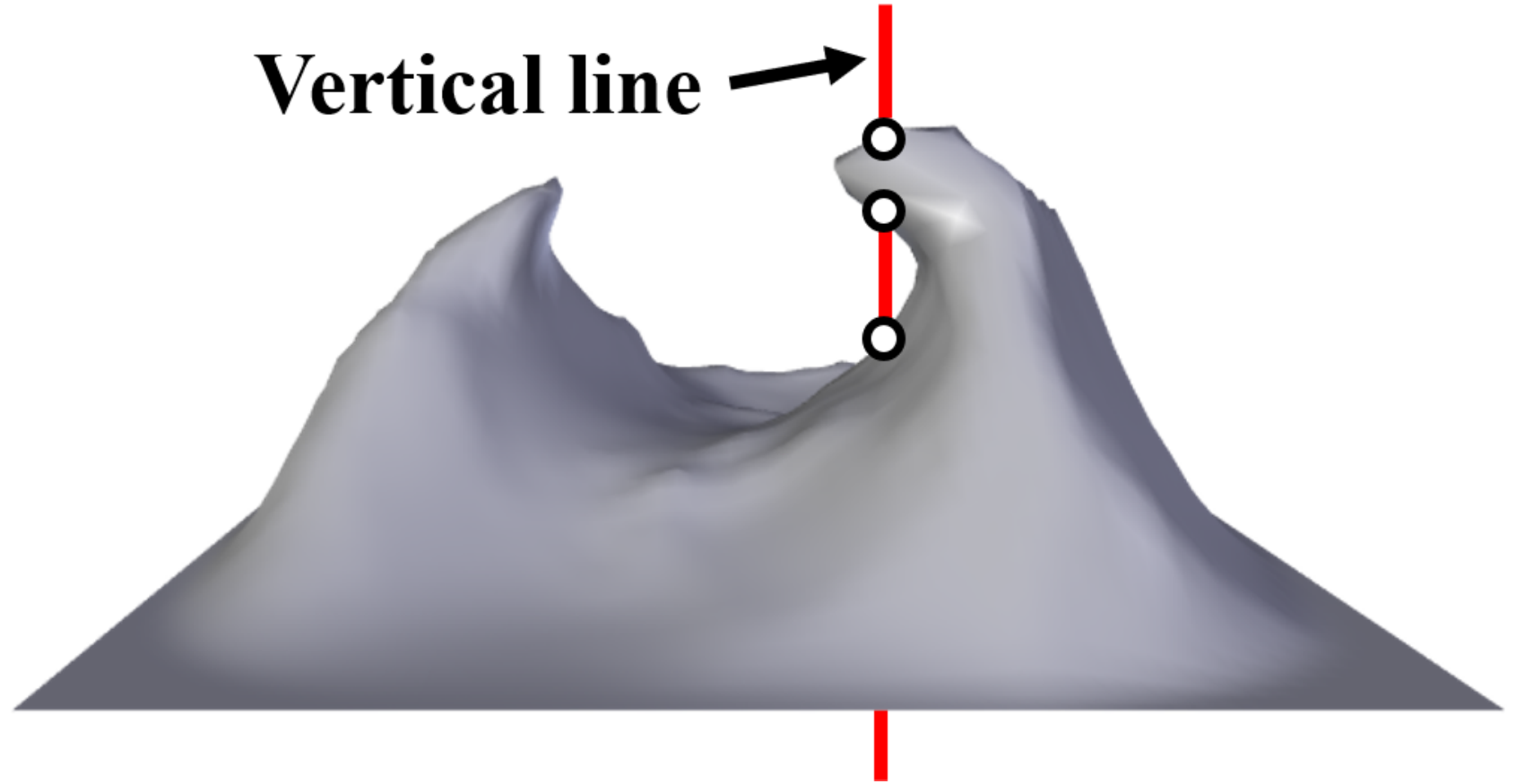}\label{fig:type2}} 
          \caption{Underwater terrain surface types.}\label{fig:surface}
\vspace{-15pt}
\end{figure}

While navigating on a plane, the AUV may detect disconnected subregions on the plane below. The process of identifying disconnected subregions is described  in Section \ref{SafetyMap}.   
Let the set of disconnected subregions on a plane $\mathcal{A}^{\ell}$ be denoted as $\mathcal{A}^{\ell}_{\mathcal{S}}$= $\{\mathcal{A}^{\ell}_{s}\subseteq \mathcal{A}^{\ell}, s=1,\ldots |\mathcal{A}^{\ell}_{\mathcal{S}}|\}$, where each $\mathcal{A}^{\ell}_s$ is a disconnected subregion such that:

\begin{itemize}
\item $\mathcal{A}^{\ell}_{i}\cap\mathcal{A}^{\ell}_{j}=\emptyset, \forall \mathcal{A}^{\ell}_{i},\mathcal{A}^{\ell}_{j} \in \mathcal{A}^{\ell}_{\mathcal{S}},i\neq j$ \textrm{and}\\
\item  $\bigcup_{s=1}^{|\mathcal{A}^{\ell}_{\mathcal{S}}|} \mathcal{A}^{\ell}_{s} \subseteq \mathcal{A}^{\ell}$ 
\end{itemize} 
Then, the total area formed by all such subregions is: 
\begin{equation}
\mathcal{A}_C = \bigcup_{\ell=0}^{L-1} \bigcup_{s=1}^{|\mathcal{A}^{\ell}_{\mathcal{S}}|} \mathcal{A}^{\ell}_{s}
\end{equation}

\begin{figure}[t]
    \centering
    \subfloat[Complete coverage of a vertical surface]{
        \includegraphics[width=0.96\columnwidth]{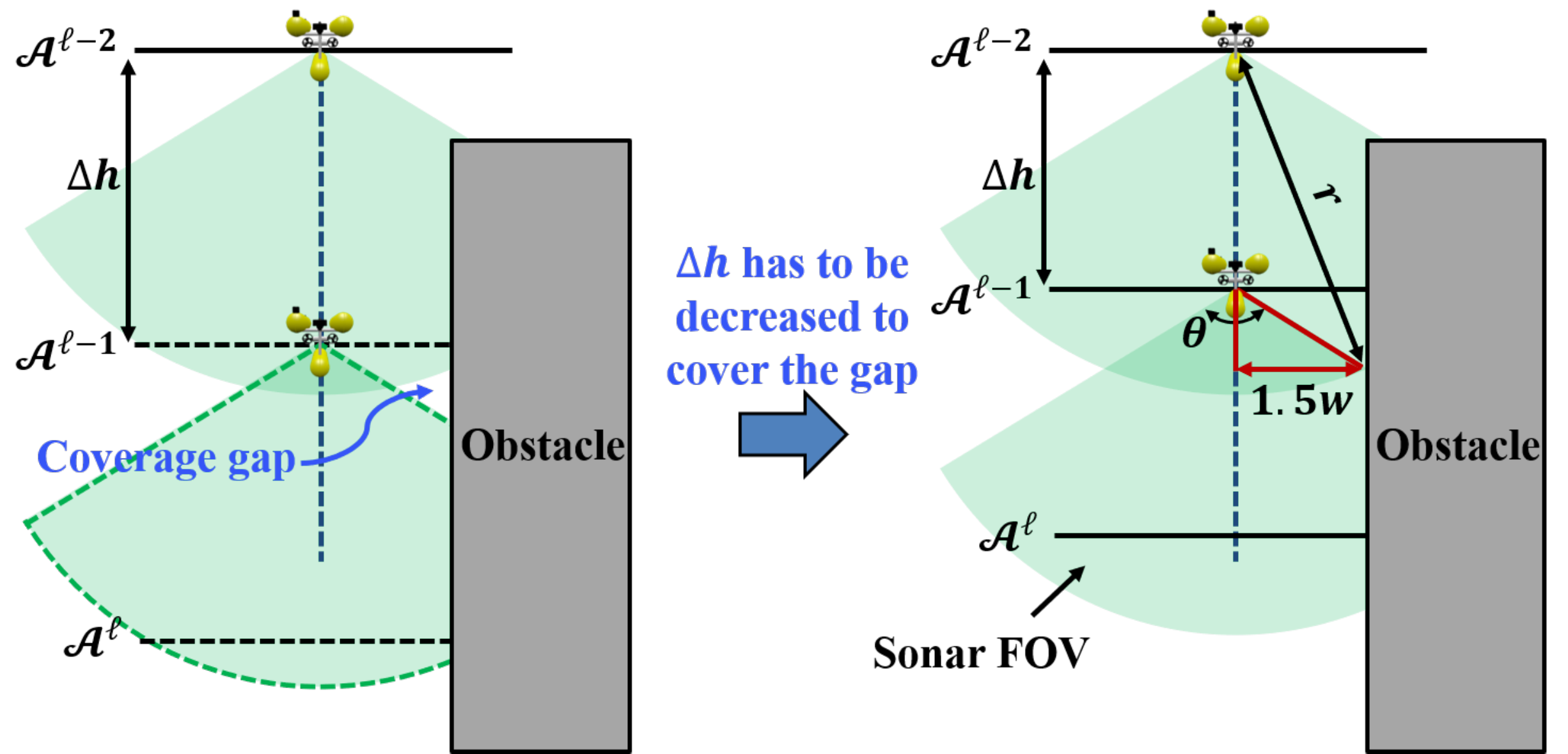}\label{fig:delta_a}}\vspace{3pt}
 \centering
    \subfloat[Complete coverage of a horizontal surface]{
         \includegraphics[width=0.96\columnwidth]{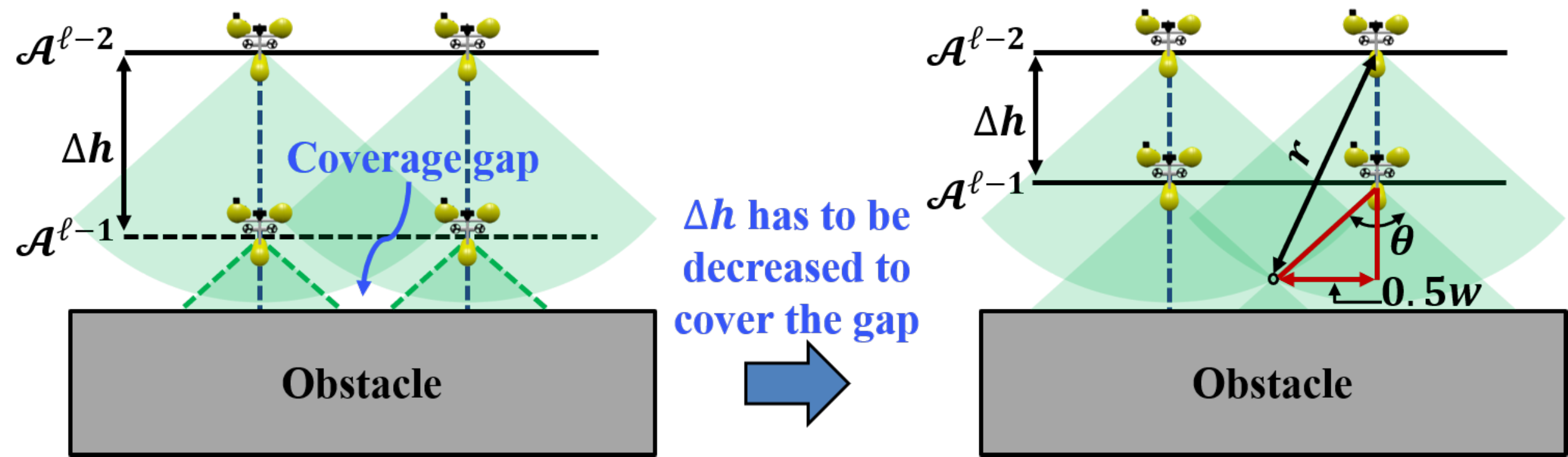}\label{fig:delta_b}}\vspace{-3pt}
         
          \caption{Setting $\Delta h$ for complete coverage of 3D space between any two successive planes $\mathcal{A}^{\ell-1}$ and $\mathcal{A}^{\ell}$ with limited sonar FOV.}\label{fig:DeltaDepth}
\vspace{-15pt}
\end{figure}

Thus, complete coverage is achieved if each of these subregions are visited and covered. 
\begin{defn}[Complete Coverage]\label{define:Complete_coverage}
Let $\mathcal{A}(k)\in \mathcal{A}^{\ell}_{\mathcal{S}}$ be the subregion visited and covered by the AUV at time instance $k$. Then, $\mathcal{U}$ is said to achieve complete coverage if $\exists K \in \mathbb{N}^{+}$ s.t. the sequence $\{\mathcal{A}(k),k=1,\ldots,K\}$ covers $\mathcal{A}_C$, i.e., 
\begin{equation}
\mathcal{A}_C = \bigcup_{k=1}^{K} \mathcal{A}(k)
\end{equation}
\end{defn}

Next, we present a 3D CPP method for complete coverage of $\mathcal{A}_C$, and thereby achieving full terrain reconstruction of $\mathcal{U}$. 

\vspace{-6pt}
\section{CT-CPP Method}
\label{sec:solution}

This section presents the details of the CT-CPP method.

\vspace{-10pt}
\subsection{Coverage Tree}\label{coverage_tree}
A coverage tree is used to track the search progress and  compute the sequence of visiting subregions during the coverage operation. Formally, a coverage tree is defined as follows.

\begin{defn}[Coverage Tree]\label{define:coverage_tree}
A coverage tree $\mathcal{Q}=\left(\mathcal{N},\mathcal{B}\right)$ is defined as an undirected acyclic graph that consists of: 
\begin{itemize}
\item A node set $\mathcal{N} = \left\{n^\ell_s: s = 1,\ldots|\mathcal{A}^{\ell}_{\mathcal{S}}|; \ell=0,\ldots L-1\right\}$, where a node $n^\ell_s$ corresponds to the subregion $\mathcal{A}^\ell_s \in \mathcal{A}^{\ell}_{\mathcal{S}}$ at level $\ell$. While  $n^0_1$ is the single root node at level $0$, every other node has only one parent.
\item A branch set $\mathcal{B}=\left\{b_i: i=1,\ldots\left|\mathcal{B}\right|\right\}$, where each branch connects a parent node with its child node. 
\end{itemize}
\end{defn}

While exploring a node (i.e., a planar subregion) at any level, the CT is incrementally built online by adding the child nodes corresponding to the disconnected subregions at the level below. This requires construction of the 2D symbolic map for the level below to identify the disconnected subregions on that plane as described below.  

\vspace{-10pt}
\subsection{Generation of 2D Symbolic Maps for Navigation on Planes}\label{SafetyMap}

While exploring any plane $\mathcal{A}^{\ell-1}$, $\ell \in \{1,\ldots L-1\}$, the AUV constructs the probabilistic occupancy map (POM) for $\mathcal{A}^{\ell}$, i.e., the plane below, using sensor measurements. 

\subsubsection{Construction of the POM} For constructing POM, first we construct a tiling on $\mathcal{A}^{\ell}$ as follows.

\begin{defn}[Tiling]\label{define:Tiling}
A set $\mathcal{T}^{\ell} = \{\tau^{\ell}_{\gamma} \subset \mathbb{R}^2:{\gamma}= 1,\ldots|\mathcal{T}^{\ell}|\}$ is a tiling of $\mathcal{A}^{\ell}$ if its elements, called tiles (or cells) have mutually exclusive interiors and cover $\mathcal{A}^{\ell}$, i.e., 
\begin{eqnarray*}
& \bullet & \ I\big(\tau^{\ell}_{\gamma}\big) \bigcap I\big(\tau^{\ell}_{\gamma'}\big) =\emptyset, \forall {\gamma},{\gamma'} \in \{1,\ldots|\mathcal{T}^{\ell}|\}, {\gamma} \neq {\gamma'}\\
& \bullet & \ \bigcup_{\gamma=1}^{|\mathcal{T}^{\ell}|}\tau_{\gamma} =\mathcal{A}^{\ell},
\end{eqnarray*}
where $I\big(\tau^{\ell}_{\gamma}\big)$ denotes the interior of the cell $\tau^{\ell}_{\gamma} \in \mathcal{T}^{\ell}$. 
\end{defn}

The POM stores the probability of obstacle occupancy at each cell of the tiling at $\mathcal{A}^{\ell}$.  Given the sensor information collected by the AUV while navigating at plane $\mathcal{A}^{\ell-1}$, the occupancy grid mapping algorithm~\cite{thrun2005} is used to estimate the occupancy probability for each cell $\tau^{\ell}_{\gamma}$ on plane $\mathcal{A}^{\ell}$, to generate its probabilistic occupancy  map (POM). Let $o^{\ell}_{\gamma}$ be a random variable defined on the set \{0,1\} to model the occupancy of $\tau^{\ell}_{\gamma}$, where $0$ and $1$ denote the obstacle-free and obstacle-occupied cells, respectively. Due to lack of a priori knowledge of the environment, all cells are initialized with a probability of $0.5$. Subsequently, the occupancy probability is updated using the Bayes' rule as follows
\begin{equation}\label{eq:ogm}
p(o^{\ell}_{\gamma}|z_{1:t},x_{1:t})=\frac{p(z_t|o^{\ell}_{\gamma}, z_{1:t-1},x_{1:t})p(o^{\ell}_{\gamma}|z_{1:t-1},x_{1:t})}{p(z_t|z_{1:t-1},x_{1:t})},
\end{equation}
where $z_{1:t}$ and $x_{1:t}$ denote the set of sensor measurements and set of robot positions, respectively; from the beginning until time $t$. Based on the Markov assumption and converting into the log odds notation~\cite{thrun2005} we get the recursive relation
\begin{multline}\label{eq:ogm_recursive}
l(o^{\ell}_{\gamma}|z_{1:t},x_{1:t})=\underbrace{l(o^{\ell}_{\gamma}|z_t,x_t)}_\text{inverse sensor model}+\underbrace{l(o^{\ell}_{\gamma}|z_{1:t-1},x_{1:t-1})}_\text{recursive term} ,
\end{multline} 
where $l(x)=log\frac{p(x)}{1-p(x)}$. Since the prior $p(o^{\ell}_{\gamma}) = 0.5$, we have $l(o^{\ell}_{\gamma})=0$. Also,  $l(o^{\ell}_{\gamma}|z_t,x_t)=\sum_{m=1}^{M}l(o^{\ell}_{\gamma}|z_t^m,x_t)$, where $m\in \mathbb{N}^+$ denotes the sensor beam, while $M\in \mathbb{N}^+$ represents the total number of beams in the multi-beam sonar. The sampling interval is $\Delta t \in \mathbb{R}^+$. If the $m^{th}$ beam detects an obstacle in $\tau^{\ell}_{\gamma}$, then $l(o^{\ell}_{\gamma}|z_t^m,x_t)$ is equal to $l_{occ}$, else it is equal to $l_{free}$, where $l_{free}=-l_{occ}$. If the beam does not even pass through the cell, then $l(o^{\ell}_{\gamma}|z_t^m,x_t)=0$. Next, $l_{occ}$ and $l_{free}$ are computed as follows. Specifically, the total number of beams crossing a certain cell per scan is given as $B=\frac{2Mtan^{-1}(\frac{w}{2\Delta h})}{\theta}$. Then, the total number of beams crossing the cell during its traversal by the AUV is $B_{total}=\frac{w/v}{\Delta t}B$, where $v$ is the AUV speed. Now, if all beams detect an obstacle, then considering the effects of false measurements, we assume that a probability of $0.9$ is achieved about the cell's obstacle occupancy. Then, $ B_{total} \times l_{occ}=log\frac{0.9}{1-0.9}$. Thus, $l_{occ}=0.002$.

\vspace{6pt}
\subsubsection{Construction of the Symbolic Map} The POM is then transformed into a symbolic map of $\mathcal{A}^{\ell}$ using a symbolic encoding~\cite{GRP09} $\Phi^{\ell}: \mathcal{T}^{\ell} \rightarrow \Sigma$, which reads the probability of each cell $\tau^{\ell}_{\gamma} \in \mathcal{T}^{\ell}$ and assigns it a state from the alphabet $\Sigma=\{U,S,T\}$, where $U\equiv \text{Unexplored}$, $S\equiv \text{Safe}$, and $T \equiv \text{Threat}$. While $U$ is assigned to the cells which have not been scanned by the AUV sensor, $T$ is assigned to the cells with high occupancy probability, thus posing a risk to the AUV. The remaining cells are assigned $S$, and are considered safe for the AUV navigation. 
To compute the $T$ and $S$ cells,  first an image morphological operator `\textit{closing}'~\cite{soille2013} is applied to the POM to close constricted spaces for safety of the AUV. The closing operator expands the boundaries of regions with high occupancy probability cells and shrinks those with low occupancy probability cells by performing image dilation followed by erosion. 
Fig.~\ref{fig:ImageClosingExample} shows an example where the closing operator enlarges the regions in constricted spaces with high occupancy probability. Subsequently, a symbolic map $\Phi^{\ell}$ is obtained such that the cells whose occupancy probability is higher than the threat probability $p_{T}=0.2$ are marked as threat ($T$), while the others are marked as safe ($S$). This means that the 2D coverage planner at that plane will not allow the AUV to go too close to the risky regions for vehicle safety.

\begin{figure}[t]
    \centering
    \includegraphics[width=0.9\columnwidth]{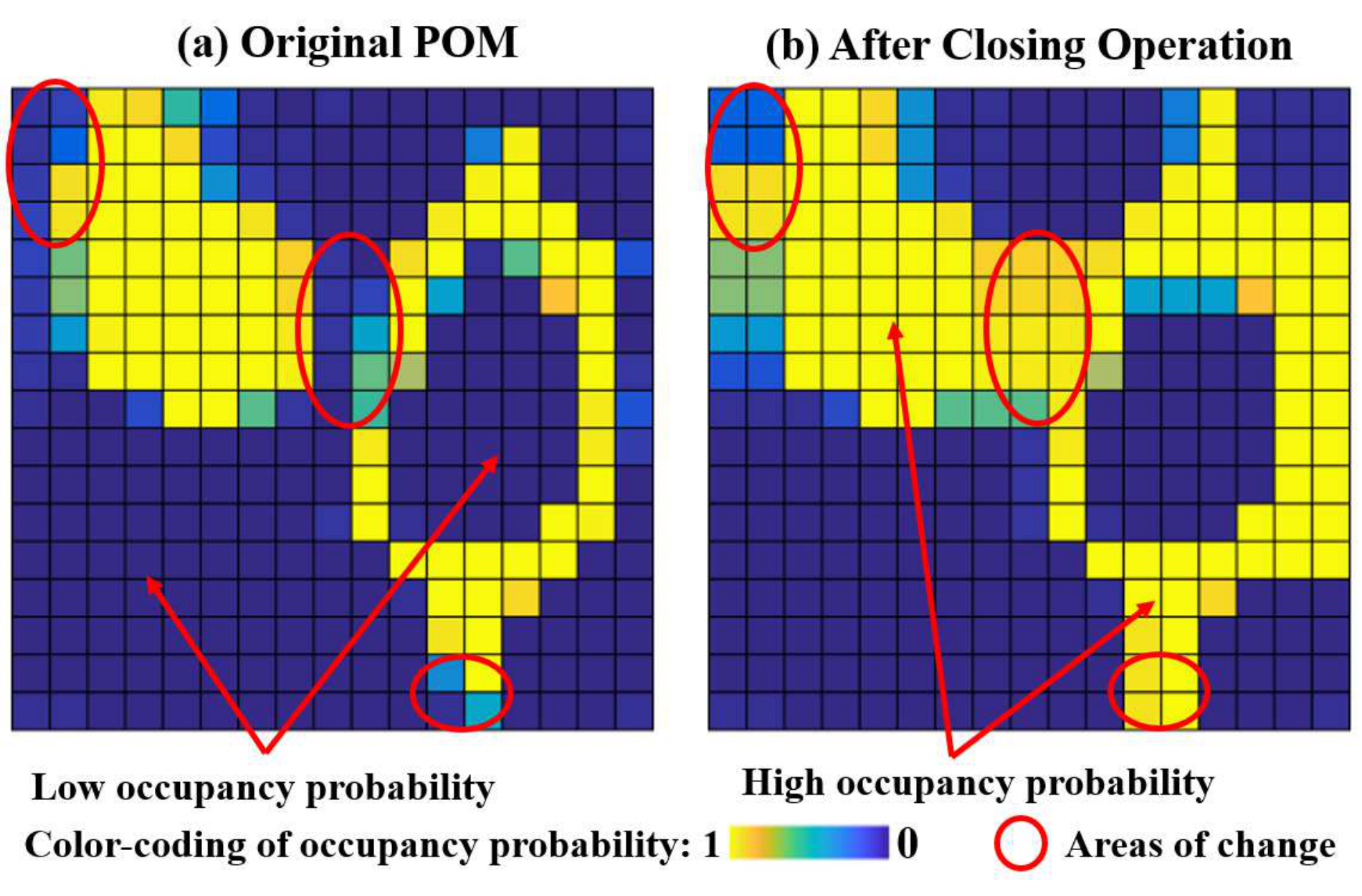} 
    \caption{The Probabilistic Occupancy Map (POM) before and after the image morphological closing operation with $3\times 3$ structure element.}
    \label{fig:ImageClosingExample} \vspace{-15pt}
\end{figure}

\vspace{3pt}
\subsubsection{Identification of Disconnected Subregions} Once the symbolic map is obtained,
the \textit{floodfill} algorithm \cite{Heckbert1990} is applied to identify the set of disconnected subregions $\mathcal{A}^{\ell}_{\mathcal{S}}$. Specifically, starting at a safe cell which does not belong to any identified subregion, the algorithm recursively searches for reachable safe cells in four directions until no more safe cells can be found. Then, these cells are grouped together to form a new subregion. The above process repeats until all subregions are found. These subregions are then added as child nodes to the current node of the CT.

\begin{figure*}[t]
    \centering
    \subfloat{\includegraphics[width=1\textwidth]{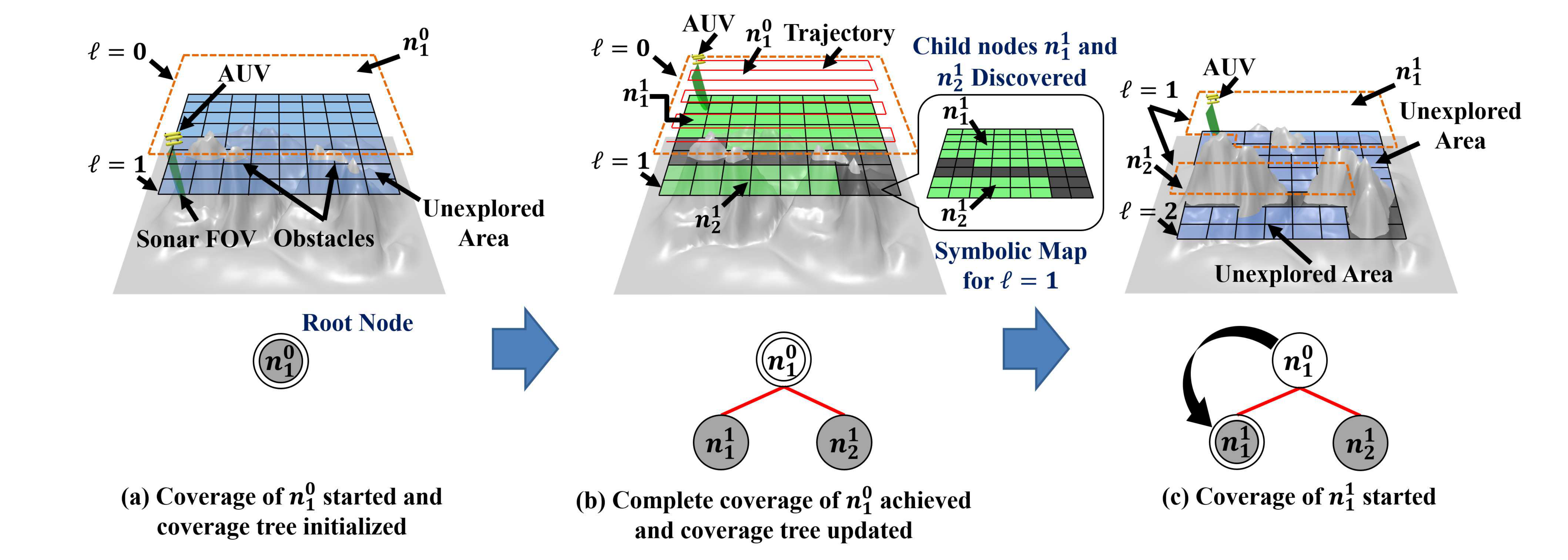}\label{fig:Tree:a}}
    \hspace{\fill}
    \centering
    \subfloat{\includegraphics[width=1\textwidth]{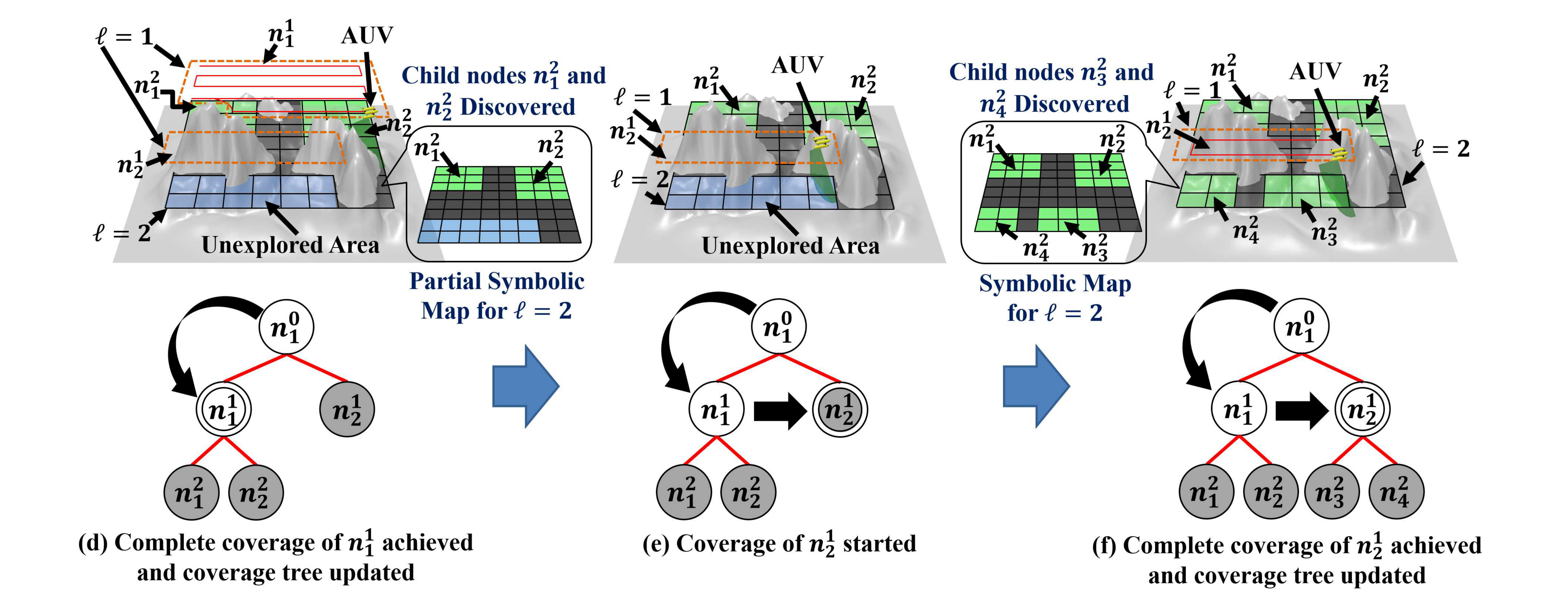}\label{fig:Tree:b}}
    \hspace{\fill}
    \centering
    \subfloat{\includegraphics[width=1\textwidth]{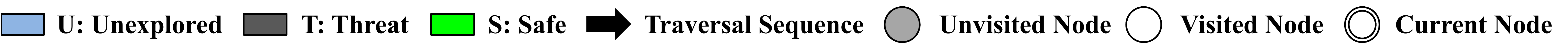}\label{fig:Tree:c}}
    \hspace{\fill}
\caption{Incremental construction of the coverage tree.}
\label{fig:CoverageTree} \vspace{-15pt}
\end{figure*}

\vspace{-11pt}
\subsection{Incremental Construction of the Coverage Tree}\label{CoverageTree}
\vspace{-3pt}
Fig.~\ref{fig:CoverageTree} illustrates the process of incremental construction of the CT. As seen in Fig.~\ref{fig:CoverageTree}(a), the tree $\mathcal{Q}$ is first initialized with the root node $n_1^0$ which corresponds to the search area $\mathcal{A}_1^0$ at level $\ell = 0$. The AUV covers $n_1^0$ using a 2D-CPP algorithm~\cite{song2018}, which guarantees complete coverage of any 2D connected region. While navigating $\mathcal{A}_1^0$, AUV uses the downward-facing multi-beam sonar sensors to generate the POM for level $\ell = 1$. Upon complete coverage of $n_1^0$, the POM is transformed into a symbolic map (see details in Section~\ref{SafetyMap}) as shown in Fig.~\ref{fig:CoverageTree}(b). The symbolic map reveals that at level $\ell = 1$, there exist two disconnected subregions separated by obstacles. These newly discovered subregions are updated to the CT as child nodes $n^1_1$ and $n^1_2$. Then the AUV covers the node $n^1_1$ and adds its children $n^2_1$ and $n^2_2$ to the tree, as shown in Figs.~\ref{fig:CoverageTree}(c) and (d). Similarly, the AUV covers $n^1_2$ and adds its children $n^2_3$ and $n^2_4$, as shown in Figs.~\ref{fig:CoverageTree}(e) and (f). The operation stops when there are no unvisited nodes available in the tree and $\mathcal{A}_C$ is completely covered. 

\vspace{-15pt}
\subsection{Computation of the Tree Traversal Sequence}
\label{traversal_strategy}
With the incremental construction of the CT, a modified TSP-based strategy is proposed for tree traversal.

Let $e_k, \ k \in \{1,\ldots|\mathcal{N}|\}$ denote the event that the AUV finishes coverage of the $k^{th}$ node on the CT. Let $\mathcal{Q}_{e_k}$ be the tree that has been updated right after the event $e_k$. Let $\mathcal{N}_{e_k}$ be the node set of $\mathcal{Q}_{e_k}$. For the purpose of tracking the coverage progress, each node of $\mathcal{Q}_{e_k}$ is assigned a state using another symbolic encoding $\Phi_{e_k}:\mathcal{N}_{e_k}\rightarrow \{E,U\}$, where $E\equiv \text{Explored}$ and $U\equiv \text{Unexplored}$. This encoding generates the set partition $\mathcal{N}_{e_k}=$$\{\mathcal{N}_{E},\mathcal{N}_{U}\}$, where $\mathcal{N}_{E}$ and $\mathcal{N}_{U}$ are the sets of explored and unexplored nodes, respectively. 

Next, a graph $\mathcal{G}=(\mathcal{V},\mathcal{E})$ is derived from $\mathcal{Q}_{e_k}$, whose vertex set $\mathcal{V} = \{v_i: v_0\in \mathcal{N}_{E} \ \textrm{and} \ v_i\in \mathcal{N}_{U}, \ \forall i=1,\ldots\eta-1\}$, where $v_0$ is the most recently explored node, and $|\mathcal{V}|=\eta$. The edge set $\mathcal{E} = \left\{e_{ij}\equiv\left(v_i,v_j\right): v_i \neq v_j, \forall v_i,v_j \in \mathcal{V} \right\}$.  

Furthermore, to compute the transition cost between vertices, each vertex $v_i \in \mathcal{V}$ is assigned coordinates $(x_{v_i},y_{v_i},z_{v_i})$, such that for $v_0$ they denote the current position of the AUV, while for other nodes, they denote the centroid of the corresponding subregion. Consider any two nodes $v_i$ and $v_j$ located at planes $\ell_i$ and $\ell_j$, respectively. Further, suppose that the first common ancestor node of $v_i$ and $v_j$ on the CT, from which a unique path exists to $v_j$, is located at level $\ell_a$. Then, the edge $e_{ij}\in\mathcal{E}$ between $v_i$ and $v_j$ is assigned an estimated transition cost $w_{ij} \equiv w_{v_iv_j} \in {\mathbb{R}^+}$, as  
\begin{equation}\label{eq:weight} 
w_{ij}=w_{ij}^{up}+w_{ij}^{hz}+w_{ij}^{down} 
\end{equation}
where $w_{ij}^{up}=\left|h(\ell_i)-h(\ell_a)\right|$ corresponds to the vertical distance between the planes of $v_i$ and the ancestor node; and $w_{ij}^{down}=\left|h(\ell_j)-h(\ell_a)\right|$ is the vertical distance between the planes of this ancestor node and $v_j$. And $w_{ij}^{hz}$ is the heuristic (assuming obstacle-free) horizontal transition cost  between two centroids. Note that the ideal way is to consider different combinations of entry and exit points as well as the associated coverage path length in each node. However, this could increase computational time. Thus, we employed the following heuristic cost method for simplicity. \vspace{-6pt}
\begin{equation}\vspace{-3pt}
w_{ij}^{hz}=\parallel \left(x_{v_i},y_{v_i}\right)-\left(x_{v_j},y_{v_j}\right) \parallel_2
\label{eq:leftweight} \vspace{-3pt}
\end{equation}
Using these costs, the weight matrix is obtained as      
$\mathcal{W}=\left[w_{ij}\right]_{\eta \times \eta}$, 
where $w_{ii} = 0, \forall i\in \left\{0,\ldots\eta-1\right\}$. 
The optimal trajectory is equivalent to the shortest path starting with $v_0$ and covering each vertex once without returning to $v_0$.

\begin{figure}[t]
    \centering
    \includegraphics[width=0.98\columnwidth]{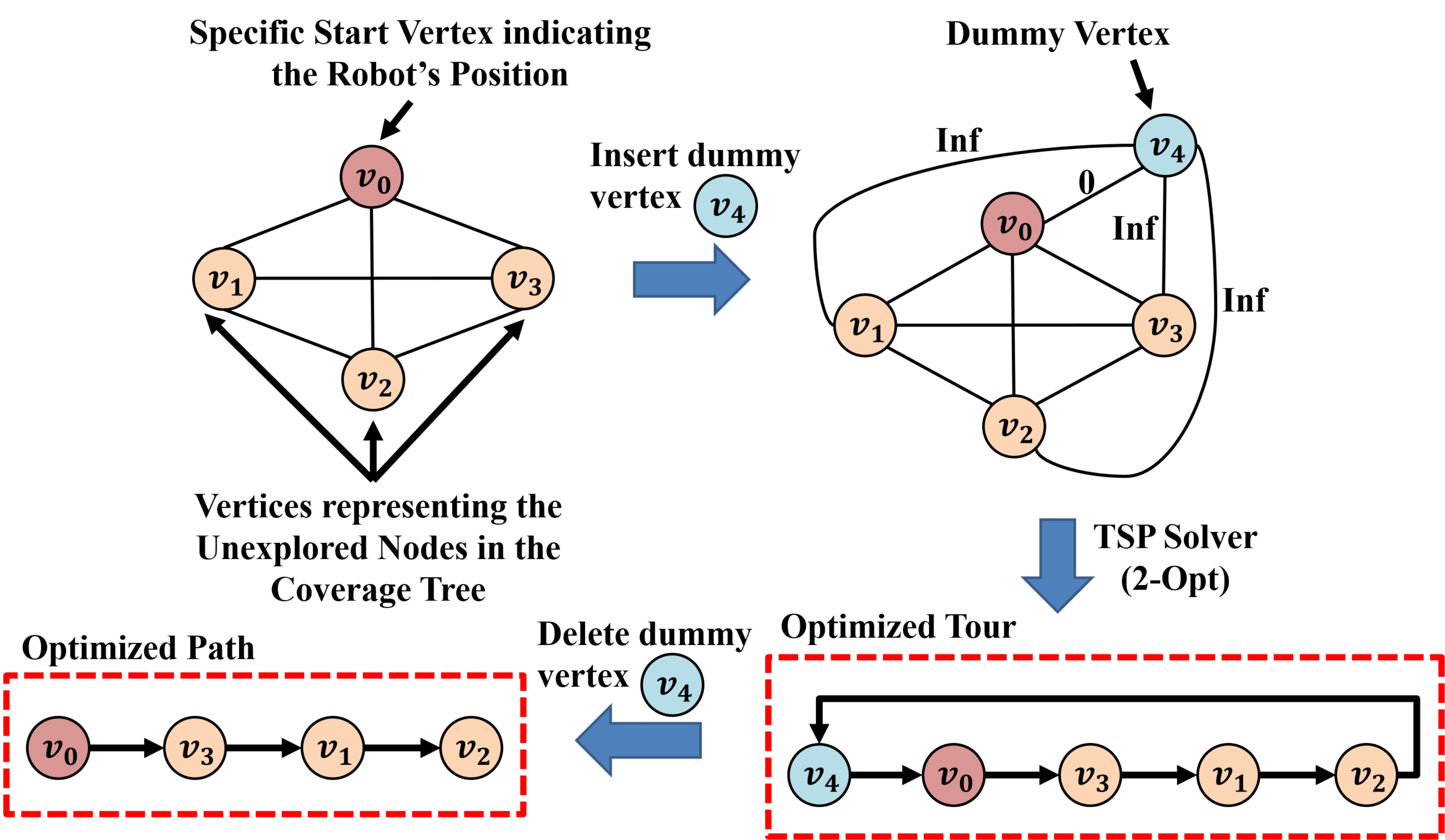}\vspace{-3pt}
    \caption{An example of the solution procedure.}
    \label{fig:TSP}\vspace{-15pt}
\end{figure}

This problem can be transformed into the Traveling Salesman Problem (TSP)\cite{aarts2003} which finds the shortest path for visiting all vertices and returning back to the start vertex. To do so, the vertex set $\mathcal{V}$ is expanded to $\mathcal{V}_E$ by adding a dummy vertex $v_{\eta}$ such that the transition costs $w_{\eta,0}=0$ and $w_{\eta,j}=\infty, \forall j\in \{1,\ldots\eta-1\}$. Then, the expanded weight matrix of size $\left(\eta+1\right) \times \left(\eta+1\right)$ is given as
\begin{equation}       
\mathcal{W}_{E}=\left(                 
  \begin{array}{cccccc}
    0 & w_{01} & w_{02} & \cdots & w_{0(\eta-1)} & 0\\
    w_{10} & 0 & w_{12} & \cdots & w_{1(\eta-1)} & \infty\\
    w_{20} & w_{21} & 0 & \cdots & w_{2(\eta-1)} & \infty\\
    \vdots & \vdots & \vdots & \vdots & \vdots & \vdots\\
    w_{(\eta-1)0} & \cdots &  & \cdots & 0 & \infty\\
    0 & \infty & \infty & \cdots & \infty & 0\\
  \end{array}
\right)                
\end{equation}

The set of all solutions of the TSP associated with the weight matrix $\mathcal{W}_E$, is denoted as $\mathcal{C}_{\mathcal{W}_E}$, which is the set of all Hamiltonian cycles, i.e., the set of all cyclic permutations of the set $\mathcal{V}_E$. A Hamiltonian cycle $\mathcal{C}\in\mathcal{C}_{\mathcal{W}_E}$, providing the order in which the vertices are visited, is of the form:
\begin{equation*}\label{eq:ogm_recursive}
\mathcal{C}=\big(v(\lambda) \in {\mathcal{V}_E}, \lambda=0,\ldots\eta+1:  v(\eta+1)=v(0)= v_{\eta}\big)
\end{equation*} 
where $v(\lambda)$ is the vertex visited at step $\lambda$, and $v(\lambda)\neq v(\lambda')$, $\forall \lambda\neq\lambda'; \lambda,\lambda'=0,1,\ldots \eta$. The cost of a cycle $\mathcal{C}$ is given as:
\begin{equation}
\mathcal{J}\left(\mathcal{C}\right)= \sum_{\lambda=0}^{\eta}w_{v(\lambda)v(\lambda+1)}
\end{equation}
Then, the optimal Hamiltonian cycle $\mathcal{C}^{\star}$ is
\begin{equation}
\mathcal{C}^{\star} \in \mathop{\argmin}_{\mathcal{C} \in \mathcal{C}_{\mathcal{W}_E}}\mathcal{J}\left(\mathcal{C}\right)
\end{equation}  

Since the TSP problem is NP hard~\cite{aarts2003}, we utilize a heuristic approach to obtain a feasible solution. First, we adopt the nearest neighbor algorithm~\cite{aarts2003} to obtain an initial tour which starts and ends at the dummy vertex. Since $w_{\eta,0}=0$, $v(1)=v_0$. Then, the 2-opt algorithm~\cite{aarts2003} is applied over this initial tour for further improvement. The 2-opt algorithm iteratively removes two non-adjacent edges and replaces them with two different edges to minimize the length until no improvement can be achieved, thus achieving the optimized tour. Then, the optimized node sequence can be obtained by removing the dummy vertex from the Hamiltonian cycle $\mathcal{C}^{\star}$.

\begin{figure*}[t]
    \captionsetup[subfloat]{farskip=2pt,captionskip=1pt}
    \hspace{\fill}
    \subfloat[Five randomly generated scenes for performance validation.]
    {\includegraphics[width=1\textwidth]{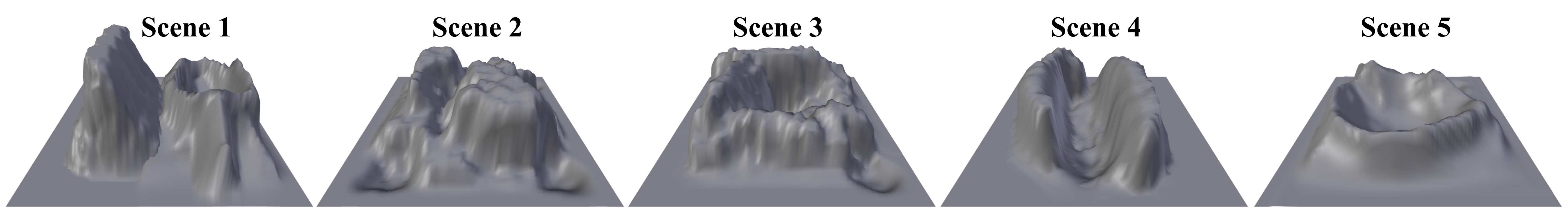}\label{fig:scenario}}
     \hspace{\fill}
    \subfloat[Coverage trajectory using CT-CPP in scene 1. Node traversal sequence: $n_1^0 \rightarrow n_1^1 \rightarrow n_1^2 \rightarrow n_2^2 \rightarrow n_2^3 \rightarrow n_3^2 \rightarrow n_4^2 \rightarrow n_5^2 \rightarrow n_6^2 \rightarrow n_7^2 \rightarrow n_1^3$.]{
        \includegraphics[width=0.5\textwidth]{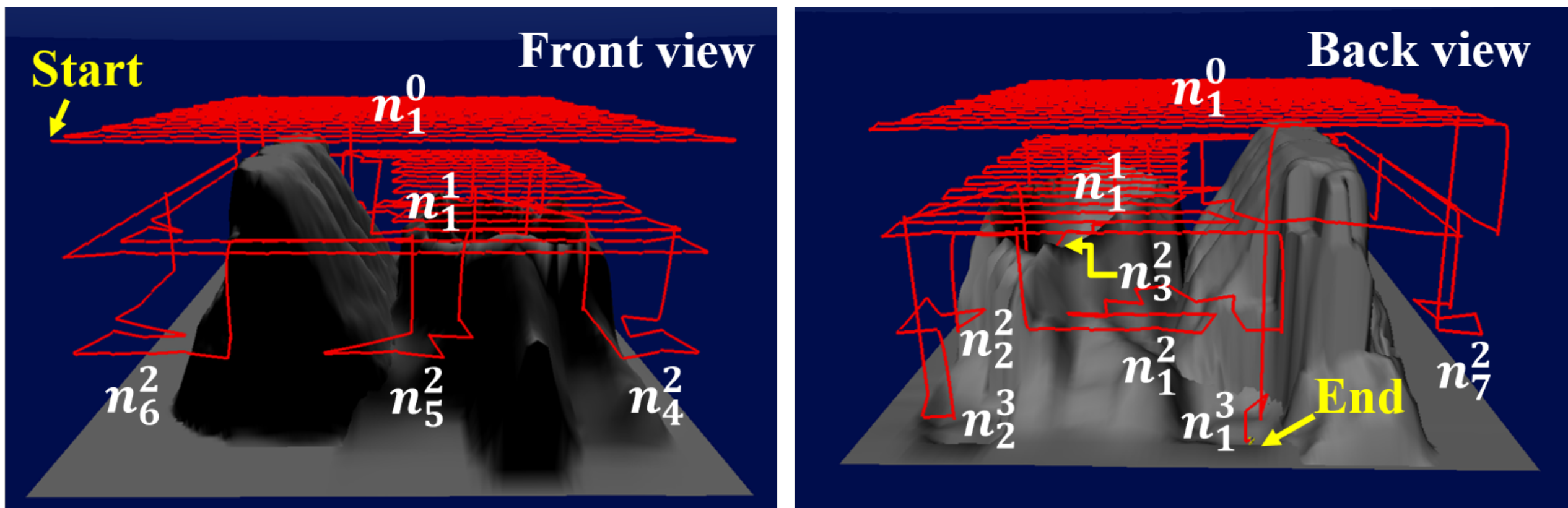}\label{fig:Trajectory-CT-TSP-CPP}}
  \hspace{\fill}
    \subfloat[Coverage trajectory using TF-CPP in scene 1.]{
         \includegraphics[width=0.5\textwidth]{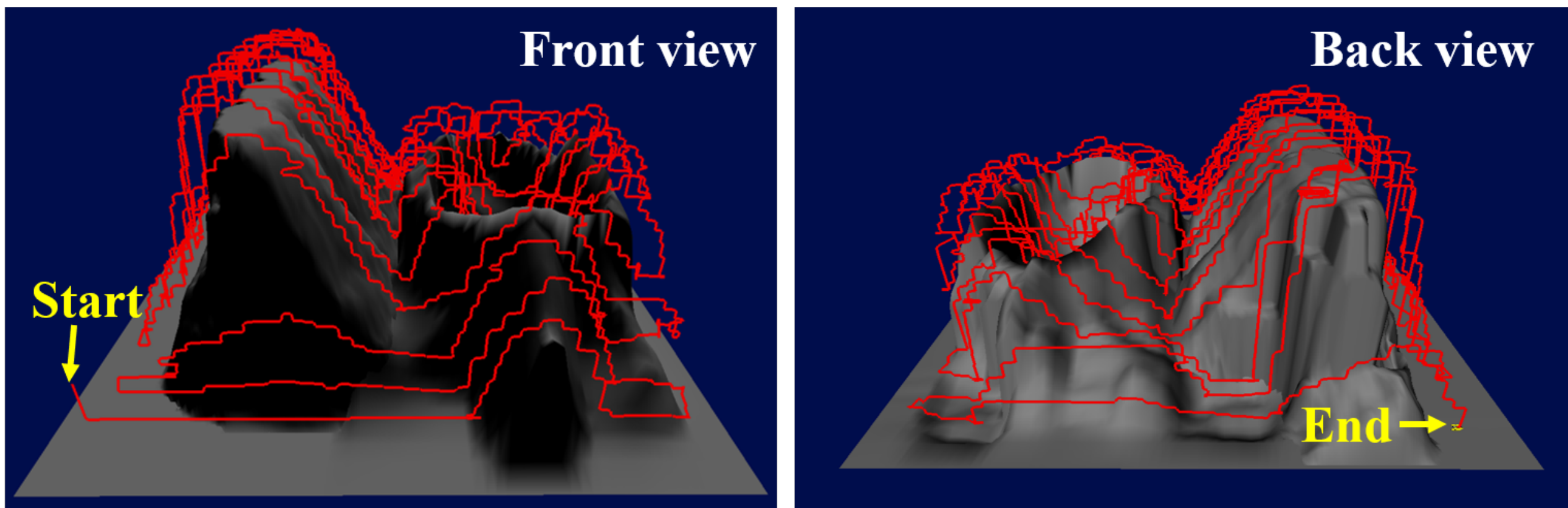}\label{fig:Trajectory-TF-CPP}}
     \hspace{\fill}
     
    \subfloat[Reconstructed map with point set obtained using CT-CPP in scene 1.]{
        \includegraphics[width=0.49\textwidth]{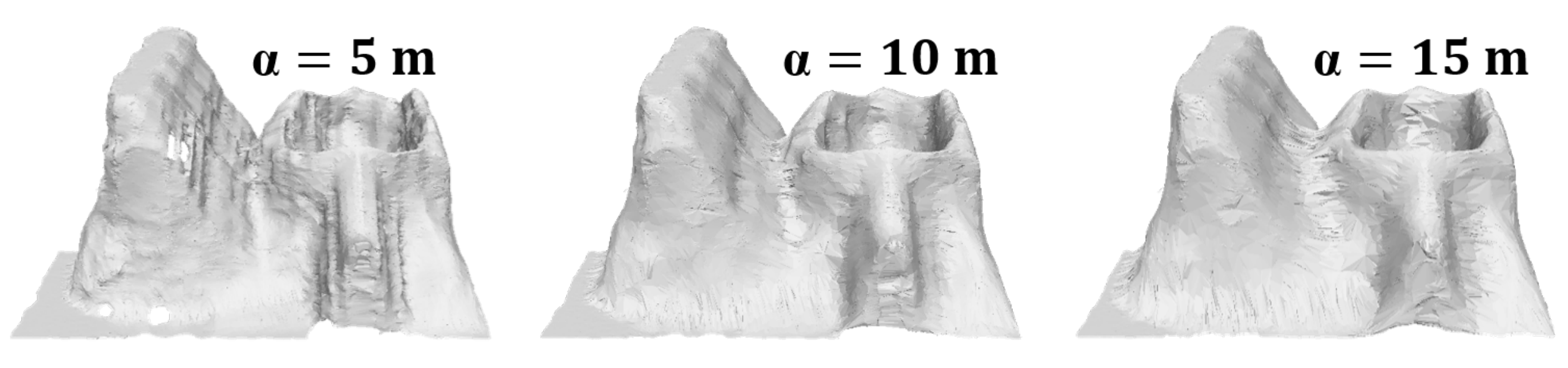}\label{fig:Reconstruction-CT-TSP-CPP}}
         \hspace{\fill}
    \subfloat[Reconstructed map with point set obtained using TF-CPP in scene 1.]{
         \includegraphics[width=0.49\textwidth]{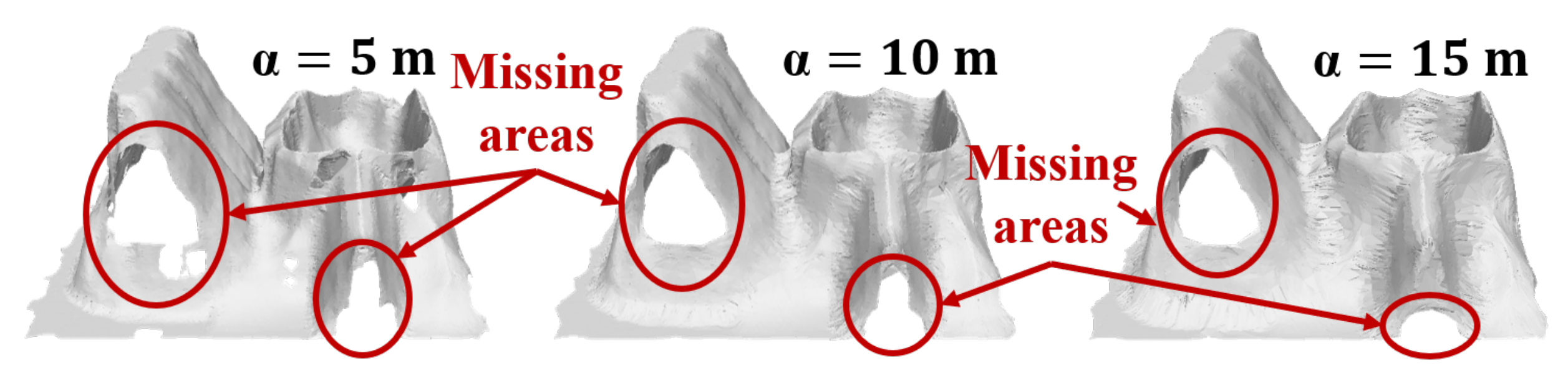}\label{fig:Reconstruction-TF-CPP}}
          \hspace{\fill}

    \subfloat[Comparison of trajectory lengths.]{
        \includegraphics[width=0.32\textwidth]{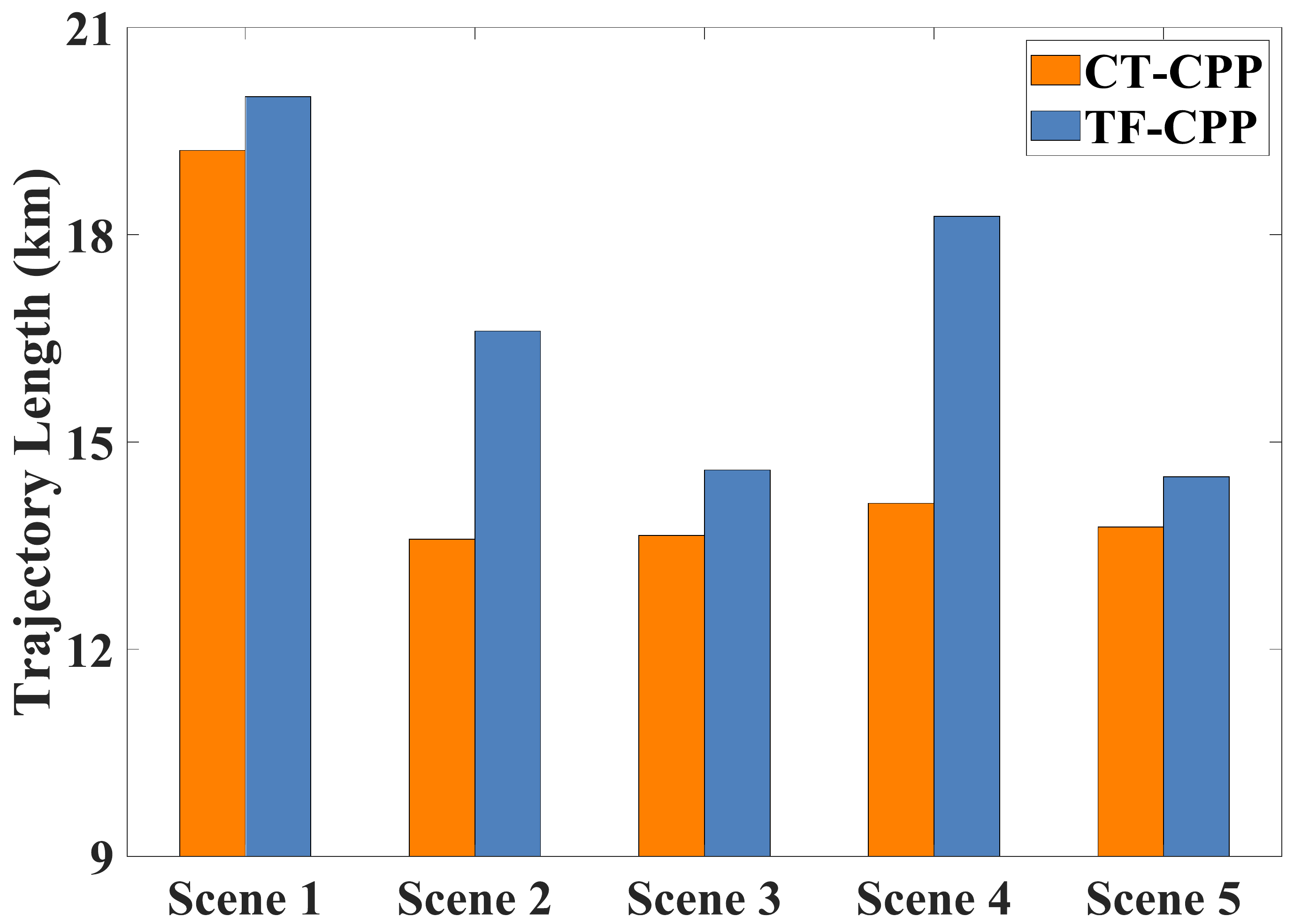}\label{fig:trajectorylength}}
 \hspace{\fill}
    \subfloat[Comparison of energy consumption.]{
         \includegraphics[width=0.32\textwidth]{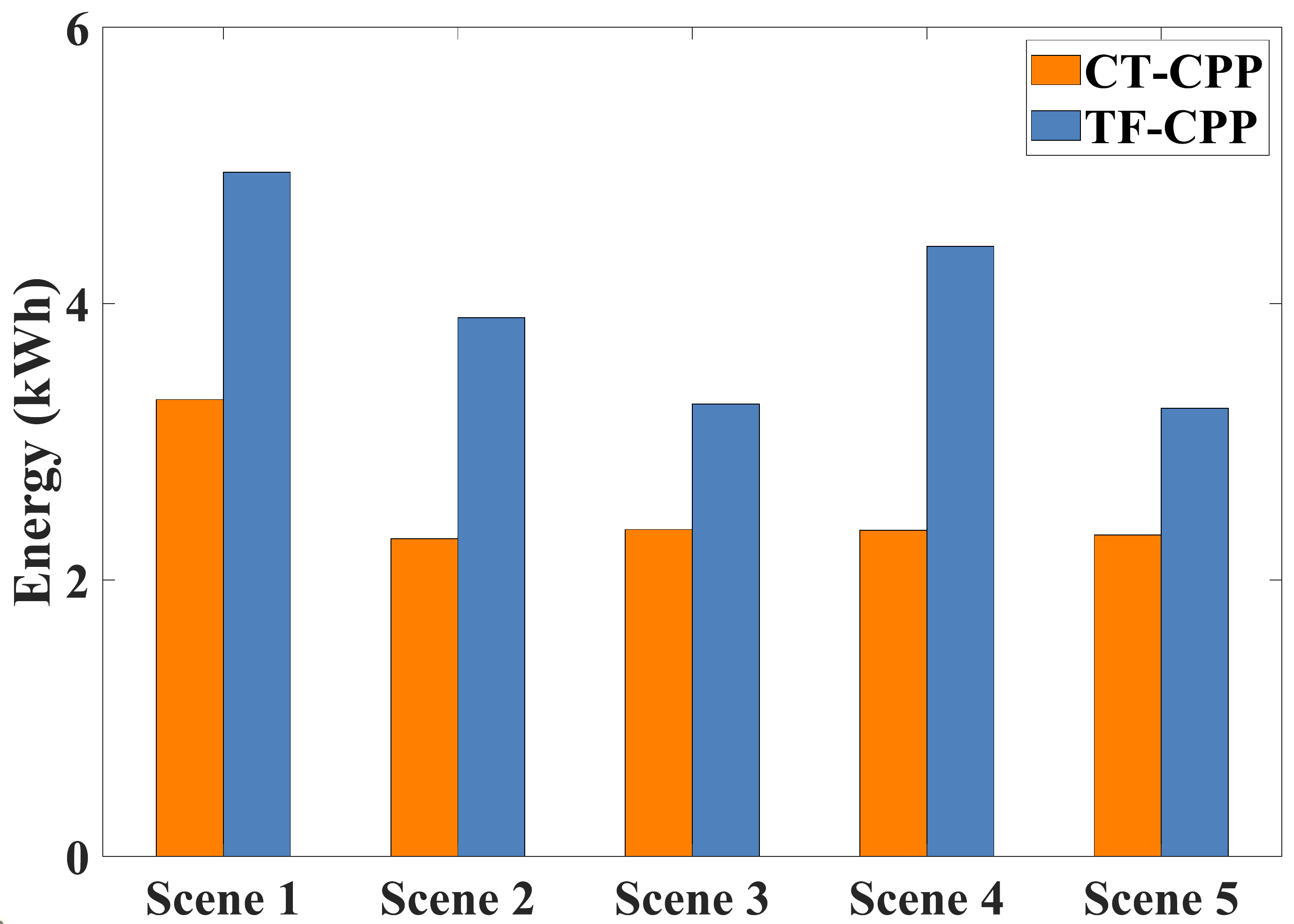}\label{fig:energyCompare}}
    \hspace{\fill}
    \subfloat[Surface reconstructed error using various $\alpha$.]{
        \includegraphics[width=0.32\textwidth]{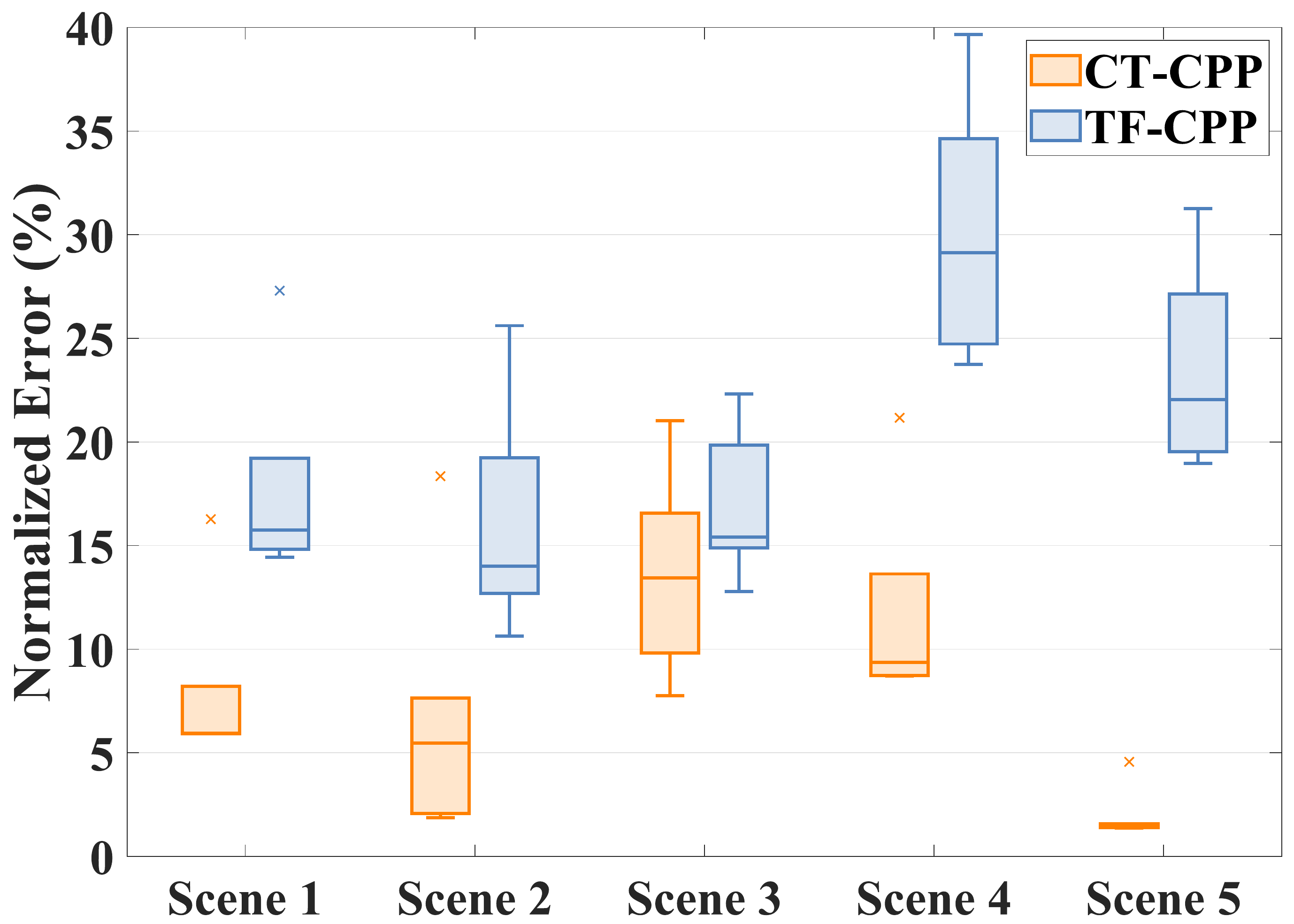}\label{fig:reconstructed_error}}
           \hspace{\fill}
           
    \caption{Performance evaluation for CT-CPP as compared to TF-CPP.}\label{fig:Result}
\vspace{-15pt}
\end{figure*}

Fig.~\ref{fig:TSP} shows an example of the solution procedure where $\mathcal{V}=\left\{v_{0},v_{1},v_{2},v_{3}\right\}$, with $v_{0}$ as the start vertex. A dummy vertex $v_{4}$ is inserted into the weighted graph. The optimized tour generated by the TSP solver is shown below. The final node sequence is obtained by removing the dummy vertex $v_{4}$. Algorithm 1 summarizes the  CT-CPP method. Once the AUV covers a $Target$ (\textbf{Line 5}), it is marked as explored (\textbf{Line 7}), removed from $\mathcal{N}_{U}$ (\textbf{Line 8}) and added to $\mathcal{N}_{E}$ (\textbf{Line 9}). Then, the children of $Target$ are identified (\textbf{Line 10}), added to $N_{U}$ (\textbf{Line 11}), and the CT is updated (\textbf{Line 12}). Finally, a new $Target$ is assigned using the $TSP$ optimizer (\textbf{Line 13}). The algorithm stops when no unexplored nodes are available.

\begin{algorithm}[t] 
\caption{Coverage Tree based 3D CPP Algorithm.} 
\label{alg:algorithm} 
\begin{algorithmic}[1] 
\State $\mathcal{Q}$ $\leftarrow$ $n^0_1$;  // initialize the tree with the root node 
\State $\mathcal{N}_{new}$ $\leftarrow$ $\emptyset$; $\mathcal{N}_{E}$ $\leftarrow$ $\emptyset$; $\mathcal{N}_{U}$ $\leftarrow$ $n^0_1$;  
\State $Target$ $\leftarrow$ $n^0_1$; // set target to be the root node
\While{$\mathcal{N}_{U}$ $\neq$ $\emptyset$}
\State \textbf{Cover}($Target$);
\If{IsComplete($Target$)}
\State Mark $Target$ as Explored;
\State $\mathcal{N}_{U}$ $\leftarrow$ $\mathcal{N}_{U}-Target$;
\State $\mathcal{N}_{E}$ $\leftarrow$ $\mathcal{N}_{E}$$\cup$$Target$;
\State $\mathcal{N}_{new}$ $\leftarrow$ \textbf{Children}($Target$);
\State $\mathcal{N}_{U}$ $\leftarrow$ $\mathcal{N}_{U}$$\cup$ $\mathcal{N}_{new}$;
\State $\mathcal{Q}\leftarrow \mathcal{Q} + \mathcal{N}_{new}$; \ \ where $'+'$ denotes update of the tree with nodes and associated branches;
\State $Target$ $\leftarrow$ \textbf{Assign}($\mathcal{N}_{U}$,$AUV_{pos}$);
\EndIf
\EndWhile
\State\Return $\mathcal{Q}$; \vspace{-3pt}
\end{algorithmic}
\end{algorithm}
\setlength{\textfloatsep}{0pt}

\vspace{-22pt}
\subsection{Computational Complexity Analysis}\label{algorithm_analysis}\vspace{-6pt}
The CT is built incrementally, where upon covering a node the AUV: i) updates the tree by adding the children of this node and ii) computes the next target node. The computational complexity of these processes is as follows. 

Suppose the AUV has covered node $n_s^{\ell}$ corresponding to the subregion $\mathcal{A}^{\ell}_{s}$. During its coverage it generates the POM for the corresponding subregion on the plane below with the same size as $\mathcal{A}^{\ell}_{s}$. Let $\mathcal{T}^{\ell}_{s}$ denote the number of cells in $\mathcal{A}^{\ell}_{s}$. Then, the closing operation with a $3\times 3$ element is applied to this POM (Section~\ref{SafetyMap}), which has $O(|\mathcal{T}^{\ell}_{s}|)$ complexity. Next, the updated POM is transformed into a symbolic map which has $O(|\mathcal{T}^{\ell}_{s}|)$ complexity. Thereafter, the \textit{floodfill} algorithm is applied to this symbolic map to find disconnected subregions, which has $O(|\mathcal{T}^{\ell}_{s}|)$ complexity. These subregions are added to the tree as new nodes. Next, a TSP of size $\eta+1$ including the dummy node is formulated (Section~\ref{traversal_strategy}) to optimize for the tree traversal sequence. An initial solution is obtained using a nearest-neighbor based heuristic approach with $O(\left(\eta+1\right)^2)$ complexity~\cite{aarts2003}. Then, the 2-Opt greedy algorithm is applied to improve the solution with $O\left(\eta+1\right)$~\cite{aarts2003} complexity. Since $|\mathcal{T}^{\ell}_{s}|$ is typically larger than $\eta$, the overall complexity for updating the tree and finding the next target node is $O(|\mathcal{T}^{\ell}_{s}|).$ The average computation time for these processes was $\sim 0.078s$ on a 3.40GHz computer with 16GB RAM.

\vspace{-12pt}
\section{Results}\label{sec:results}
\vspace{-0pt}
The proposed CT-CPP method was validated on a high-fidelity underwater robotic simulator called UWSim~\cite{prats2012}, that interfaces with external control algorithms through the Robot Operating System (ROS) to simulate the AUV and its sensors. 

The AUV, Girona 500, of size $1.5\text{m} \times 1\text{m} \times 1\text{m}$, maximum operating depth of $500\text{m}$, and 8 thrusters to control all the degrees of freedom~\cite{ribas2012}, was simulated with a speed $v=1\text{m/s}$. It is equipped with two multi-beam sonar sensors, one facing downward for terrain data collection, and the other facing forward for obstacle avoidance. The aperture angle, sensing range, number of beams and  sampling interval of the sonars were selected as $\theta = 120^{\circ}$, $r = 150\text{m}$, $M=128$, and $ \Delta t=1$\text{s}, respectively~\cite{melvin2015}. A Doppler Velocity Log (DVL) and an Inertial Measurement Unit (IMU) were used to obtain the velocity and heading angle of the AUV at any time. And a Long Baseline acoustic localization (LBL) system~\cite{shen2016} was used to estimate its location in the GPS-denied environment.

Five scenes of size $450\text{m}\times 450\text{m} \times 400\text{m}$ were randomly generated for performance validation, as shown in Fig.~\ref{fig:scenario}. For trajectory visualization purpose,  Fig.~\ref{fig:Trajectory-CT-TSP-CPP} shows scene 1 which was  sliced using a set of 4 planes $\{\mathcal{A}^\ell \subset \mathbb{R}^2, \ell = 0, \ldots 3\}$. Then, on each plane $\mathcal{A}^{\ell}$ a tiling $\mathcal{T}^{\ell}$ was constructed consisting of $18 \times 18$ cells, with each cell of dimension $25\text{m} \times 25\text{m}$. The lap width  was chosen as $w = 25\text{m}$. Subsequently, the distance between two consecutive planes was computed as $\Delta h=85\text{m}$ using Prop. (\ref{eq:depth}). 
Fig.~\ref{fig:Trajectory-CT-TSP-CPP} and Fig.~\ref{fig:Trajectory-TF-CPP} present a comparative evaluation of the coverage trajectories generated by the CT-CPP and the TF-CPP methods~\cite{hert1996}, respectively, in scene 1.

The downward-facing multi-beam sonar sensor constantly collected the terrain data to create a point cloud of $\sim2$ million data points using the proposed method. Upon completion of the coverage process, we used the Computational Geometry Algorithms Library (CGAL)~\cite{alliez2010} to filter the data. We adopted a grid-based filtering approach~\cite{alliez2010} to abstract the point set to reduce the computational costs, and the $k$-nearest neighbors approach~\cite{alliez2010} to remove the distance-based outliers. Thereafter, the $\alpha$-shapes algorithm~\cite{edelsbrunner1994} provided in the MeshLab software~\cite{cignoni2008} was used for terrain reconstruction, where $\alpha$-shape is a standard geometric tool to reconstruct surfaces from an unorganized point cloud. It relies on a parameter $\alpha \in \mathbb{R}^+$ to control the desired level of details. Fig.~\ref{fig:Reconstruction-CT-TSP-CPP} and Fig.~\ref{fig:Reconstruction-TF-CPP} show the reconstructed surfaces for scene 1 using the CT-CPP and TF-CPP methods, respectively. Fig.~\ref{fig:Reconstruction-CT-TSP-CPP} shows that the proposed method was able to fully reconstruct the underwater terrain; while Fig.~\ref{fig:Reconstruction-TF-CPP} shows that the TF-CPP method missed the side surfaces of high mountains, as explained earlier in Section~\ref{sec:review}.

Three metrics are used for performance evaluation as follows: i) \textit{trajectory length}, ii) \textit{energy consumption}, and iii) \textit{surface reconstruction error}. Fig.~\ref{fig:trajectorylength} shows the total trajectory lengths for all five scenes, which indicate that the CT-CPP method took the shorter path lengths. To compare the energy consumption, we used the energy model~\cite{de2014}:
$\Delta E=k_1 \Delta_h + k_2 \Delta_v$, where $k_1 = 557.24 \text{J/m}$, $k_2=1118.13 \text{J/m}$ and $\Delta_h$ and $\Delta_v$ represent the movements on the horizontal and vertical direction, respectively. A nonlinear energy model could be used in future for better energy estimates. Fig.~\ref{fig:energyCompare} shows that the CT-CPP method consumed less energy in all five scenes as compared to the TF-CPP method due to its frequent vertical motions. Further, we numerically evaluated the reconstruction errors using different $\alpha$. Specifically, for each $\alpha$, we first obtained two sets of 3D sample points from the actual surface and the reconstructed surface, respectively. Then, we partitioned the search area $\mathcal{U}$ into cuboids of size $5\text{m} \times 5\text{m} \times 400\text{m}$, and calculated the average height of the sample points inside each cuboid for both point sets. Note that the average height of sample points inside the cuboid occupied by the missing area is equal to zero. Finally, the Root Mean Square Error (RMSE) is computed between the reconstructed surface and the actual surface. Fig.~\ref{fig:reconstructed_error} shows the box plots of normalized RMSE with respect to the ocean depth for different $\alpha$ values using CT-CPP and TF-CPP methods. As seen, CT-CPP performs better in all five scenes. Thus, CT-CPP is an alternate method to TF-CPP and it is expected to outperform the TF-CPP method in a significant number of scenarios; however, it is possible that for relatively planar scenarios TF-CPP might perform better. 

\vspace{-8pt}
\section{Conclusions}\label{sec:conclusions}
The letter presents a 3D CPP method, called CT-CPP, for unknown terrain reconstruction. CT-CPP incrementally builds a CT as the environment is explored, where the tree traversal sequence is optimized  using a related TSP. It is  shown that CT-CPP is computationally efficient and guarantees complete coverage of projectively planar surfaces. The method is comparatively evaluated with an existing method, called TF-CPP, on a high-fidelity underwater simulator. The results show that CT-CPP results in reduced trajectory lengths, energy consumption and reconstruction error.

\vspace{-9pt}
\bibliographystyle{IEEEtran}
\bibliography{reference}
\end{document}